\documentclass{article} \usepackage{conference,times}

\usepackage{amsmath,amsfonts,bm}

\def\eqref#1{equation~\ref{#1}}

\def\1{\bm{1}}

\DeclareMathAlphabet{\mathsfit}{\encodingdefault}{\sfdefault}{m}{sl}
\SetMathAlphabet{\mathsfit}{bold}{\encodingdefault}{\sfdefault}{bx}{n}

\usepackage{hyperref}
\usepackage{url}
\usepackage{amsthm}
\usepackage{wrapfig}
\usepackage{graphicx}
\usepackage{subcaption}
\usepackage{booktabs}
\usepackage{adjustbox}

\usepackage{booktabs}
\usepackage{caption}
\usepackage{xcolor}

\usepackage{multirow}
\usepackage{threeparttable}

\definecolor{errorpink}{HTML}{D89DAD}

\newcommand{\dnserror}[1]{\textcolor{black}{(}\textcolor{errorpink}{\ensuremath{#1}}\textcolor{black}{)}}

\newsavebox{\pretraintablebox}
\newsavebox{\pretrainimagebox}

\newsavebox{\pretraincurvebox}

\newsavebox{\turbulencetablebox}
\newsavebox{\turbulenceimagebox}
\newsavebox{\turbulencescaledtablebox}
\newsavebox{\turbulencescaledimagebox}

\newlength{\turbulencegap}

\newlength{\turbulencetablewidth}
\newlength{\turbulencetableheight}

\newlength{\turbulenceimagewidth}
\newlength{\turbulenceimageheight}

\newlength{\turbulencebodyheight}
\newlength{\turbulenceleftwidth}
\newlength{\turbulencerightwidth}

\ExplSyntaxOn

\fp_new:N \l__turbulence_table_aspect_fp
\fp_new:N \l__turbulence_image_aspect_fp

\cs_new_protected:Npn \turbulence_compute_common_height:
{
    \fp_set:Nn \l__turbulence_table_aspect_fp
    {
        \dim_to_fp:n { \turbulencetablewidth }
        /
        \dim_to_fp:n { \turbulencetableheight }
    }

    \fp_set:Nn \l__turbulence_image_aspect_fp
    {
        \dim_to_fp:n { \turbulenceimagewidth }
        /
        \dim_to_fp:n { \turbulenceimageheight }
    }

    \dim_set:Nn \turbulencebodyheight
    {
        \fp_to_dim:n
        {
            (
                \dim_to_fp:n { \linewidth }
                -
                \dim_to_fp:n { \turbulencegap }
            )
            /
            (
                \l__turbulence_table_aspect_fp
                +
                \l__turbulence_image_aspect_fp
            )
        }
    }
}

\NewDocumentCommand{\computeturbulencecommonheight}{}
{
    \turbulence_compute_common_height:
}

\ExplSyntaxOff

\title{Not Every Term Adds New Structure: \\Sobolev Novelty for Symbolic Regression}

\author{\textbf{Boxiao Wang\textsuperscript{1}\quad
        Kai Li\textsuperscript{1,2,*}\quad
        Yuheng Jing\textsuperscript{1}\quad
        Tianyi Liu\textsuperscript{3}}\\
    \textbf{Chen Li\textsuperscript{3}\quad
        Yifan Zhang\textsuperscript{1,2,4,*}\quad
        Jian Cheng\textsuperscript{1,2,5}}\\[4pt]
    {\normalfont\small
        \textsuperscript{1}\,C2DL, Institute of Automation,
        Chinese Academy of Sciences}\\
    {\normalfont\small
        \textsuperscript{2}\,School of Artificial Intelligence,
        University of Chinese Academy of Sciences}\\
    {\normalfont\small
        \textsuperscript{3}\,State Key Laboratory of Aerodynamics}\\
    {\normalfont\small
        \textsuperscript{4}\,University of Chinese Academy of Sciences,
        Nanjing}\\
    {\normalfont\small
        \textsuperscript{5}\,AiRiA}\\[3pt]
    {\normalfont\small
        \textsuperscript{*}\ Corresponding authors}
}

\usepackage{amsmath,amssymb,amsthm}

\theoremstyle{plain}
\newtheorem{sntheorem}{Theorem}[section]
\newtheorem{snlemma}[sntheorem]{Lemma}
\newtheorem{snproposition}[sntheorem]{Proposition}
\newtheorem{sncorollary}[sntheorem]{Corollary}

\theoremstyle{definition}
\newtheorem{sndefinition}[sntheorem]{Definition}

\theoremstyle{remark}

\theoremstyle{plain}

\begin{document}

\maketitle

\pagestyle{fancy}
\fancyhead{}
\fancyfoot{}
\fancyfoot[C]{\thepage}
\thispagestyle{fancy}

\begin{abstract}
Symbolic regression (SR) aims to discover compact and meaningful mathematical equations from data, but searching the vast combinatorial space of symbolic structures remains challenging. Existing methods typically guide this process using expression-level objectives, such as fitting error, which assess a candidate equation as a whole but provide little information about whether an individual term contributes genuinely new structure or is largely redundant with the rest of the expression. We introduce \textbf{Sobolev Novelty}, a term-level measure of structural independence for symbolic equations. For each term, we construct an empirical Sobolev signature from its function values and exact derivatives over the observed inputs, and quantify how much of this behavior cannot be reconstructed by the remaining terms. 
We further derive a theory-calibrated threshold, yielding a principled and tuning-free criterion for identifying structurally novel terms. Using this threshold, 92.6\% of terms in benchmark ground-truth equations exhibit sufficient structural novelty, compared with only 38.3\% on average for expressions produced by 15 SR methods, revealing a substantial gap between scientific equations and current SR solutions.
As a lightweight plug-in, Sobolev Novelty can be incorporated into diverse SR paradigms to support term pruning, search guidance, LLM feedback, and data selection, yielding consistent performance gains and demonstrating broad applicability.
\end{abstract}

\section{Introduction}

Symbolic regression (SR) aims to discover compact and interpretable mathematical equations directly from observed data~\citep{makke2024interpretable}. Unlike conventional regression, where the functional form is specified in advance, SR must search jointly over symbolic structures and numerical parameters. This flexibility makes SR particularly appealing for scientific discovery, where the goal is often not only to predict accurately but also to recover an interpretable mathematical relationship~\citep{udrescu2020ai,brunton2016discovering,rudy2017data}. At the same time, it creates a fundamental search challenge: the space of candidate equations grows combinatorially with expression length, operator choices, variables, and their compositions. A broad range of approaches---including genetic programming, reinforcement learning~\citep{petersen2019deep}, tree search~\citep{sun2022symbolic}, pretrained generative models~\citep{biggio2021neural}, and more recently LLM-based methods~\citep{shojaee2025llm}---have been developed to navigate this space.

Despite their different search mechanisms, existing SR methods are typically guided by objectives defined at the level of complete candidate equations, most notably fitting error. Such objectives are effective at answering whether a candidate equation fits the observations well, but provide much less information about the structural role of its individual terms. 
Consider an equation candidate
$s(x)=\sum_{i=1}^{m} b_i\phi_i(x),$
where $\phi_i$ denotes a symbolic term and $b_i$ its numerical coefficient. The presence of $\phi_i$ in a fitted expression does not necessarily imply that $\phi_i$ represents an independently distinguishable component. Over the observed domain, its behavior may already be well reproduced by a combination of the remaining terms. In this case, multiple symbolic components participate in the fit while describing largely overlapping behavior.

For scientific discovery, this distinction is more than a matter of expression redundancy. Individual terms in a discovered equation are often interpreted as distinct components of the underlying relationship. If the observable behavior attributed to one term can already be reproduced by the remaining terms, however, the data provide limited support for interpreting that term as an independently supported structure. An accurate expression can therefore admit a symbolic decomposition that is less identifiable than its predictive performance might suggest. This motivates the central question of this work: \textit{How much behavior does an individual term contribute beyond what is already represented by the rest of the expression?}
This requires addressing two questions: how to represent the behavior of a symbolic term over the observed domain, and how to quantify the component of that behavior not captured by the rest of the expression.

We introduce \textbf{Sobolev Novelty}, a term-level measure of structural independence for symbolic equations. For each term, we construct an empirical Sobolev representation from its function values and exact first-order derivatives over the observed inputs, capturing both its observed behavior and local variation. We then measure how much of this representation lies outside the span of the remaining terms. Concretely, Sobolev Novelty is the normalized leave-one-term-out reconstruction residual in this empirical Sobolev space. A low-novelty term is largely representable by the other terms, whereas a high-novelty term contributes a behavioral direction that cannot be readily reproduced through coefficient adjustment. Importantly, the measure is entirely candidate-internal: it depends only on the symbolic expression and the observed inputs, requiring neither target derivatives nor access to the ground-truth equation.

We further derive a \textbf{theory-calibrated threshold} for Sobolev Novelty, providing a principled operating point without algorithm- or dataset-specific threshold tuning. Using this threshold, we observe a pronounced difference between benchmark scientific equations and expressions discovered by existing SR methods: 92.6\% of terms in ground-truth equations exhibit sufficient structural novelty, compared with only 38.3\% on average for expressions produced by 15 SR methods. This gap suggests that current SR systems can recover formulas with competitive predictive quality while producing symbolic decompositions whose constituent terms are substantially less distinguishable than those in the reference scientific equations. In other words, predictive accuracy alone does not guarantee that the internal decomposition of a discovered equation is equally well supported by the data. This finding motivates using Sobolev Novelty not only as a diagnostic of discovered expressions, but also as an actionable signal for improving the SR process.

Sobolev Novelty is a \textbf{lightweight plug-in} that can be integrated into diverse SR paradigms at different stages of their pipelines. 
For example, search-based methods can use it to retain structurally informative terms and prune or deprioritize redundant ones. LLM-based SR can use term-level novelty as explicit feedback about which parts of a proposed expression add new behavior and which repeat what is already present. Pretrained generative SR models can use the metric to filter synthetic training equations with substantial term-level redundancy. Across representative SR paradigms, incorporating Sobolev Novelty yields consistent performance improvements, supporting its use as a broadly applicable complement to conventional equation-level objectives.

Taken together, our results suggest that predictive accuracy alone does not fully characterize the quality of a symbolic equation for scientific discovery. Beyond fitting the observations, it is also desirable to understand whether the constituent terms contribute distinguishable structural information over the observed domain. Sobolev Novelty provides a practical way to quantify and encourage this property, offering a complementary criterion for steering SR toward expressions that are not only accurate and compact, but also more structurally identifiable and scientifically interpretable.

\section{Preliminaries}

In SR, the learning task starts with a dataset:
$
\mathcal{D}
=
\{(x_n,y_n)\}_{n=1}^{N},
x_n\in\mathbb{R}^{d},\;
y_n\in\mathbb{R},
$
where $x_n$ denotes a $d$-dimensional input vector and $y_n$ is the
corresponding scalar output. The goal is to discover an analytic
expression $s(\cdot)$ such that $s(x_n)$ accurately approximates the
targets $y_n$, while generalizing well to unseen inputs.

Most SR methods can be viewed as involving two coupled components: structure search and evaluation. Structure search explores the discrete space of expressions, determining the variables, operators, and compositions that form a candidate equation. Different SR paradigms realize this process through, for example, genetic programming, reinforcement learning, tree search~\citep{shojaee2023transformer,kamienny2023deep}, pretrained generative models~\citep{meidani2024snip}, and LLM-based generation. Candidate structures may also contain numerical constants, which are typically optimized against the observed data after the symbolic structure is proposed.

The resulting expression is then evaluated according to predictive quality, often together with other global criteria such as expression complexity~\citep{la2021contemporary,imai2025call,yu2025symbolic}. These equation-level objectives provide the signal used to rank, select, or refine candidate expressions~\citep{yu2025beyond}. While effective for assessing the overall quality of a candidate, they do not directly characterize the structural contribution of its individual terms. Next, we introduce a term-level perspective and develop Sobolev Novelty as a complementary measure of structural independence. Further discussion of related work is provided in Appendix~\ref{app:related_work}.

\section{Method}

\subsection{From Term Representability to Sobolev Novelty}

\paragraph{When does a term add new behavior?}
We define the structural contribution of a term relative to the current candidate: a term contributes new behavior to the extent that it cannot be reconstructed from the remaining terms. Consider a candidate equation
$s(x)=\sum_{i=1}^{m} b_i\phi_i(x)$.
If $\phi_i$ is removed, the coefficients of the remaining terms can be refitted. Because these terms enter the expression through linear coefficients, any component of $\phi_i$ that can be reproduced by their linear combination can be absorbed through coefficient adjustment. The residual after the best such reconstruction is therefore the behavior of $\phi_i$ that is not already represented elsewhere in the candidate. This turns term-level novelty into a leave-one-term-out reconstruction problem.

\paragraph{How do we quantify term behavior?}
To make the preceding reconstruction problem computable, we require an empirical representation of the behavior contributed by each term over the observed inputs. Function values alone may be insufficient to distinguish terms that agree at the sampled points but exhibit different local variation. We therefore characterize this behavior along two complementary dimensions: function values and gradients. The value vector \(\phi_i(X)\) records the outputs produced by the term at the observed inputs, while the input gradient \(\nabla_x\phi_i(X)\) characterizes how the term varies locally with respect to the inputs. Because candidate terms are explicit symbolic functions, both quantities can be evaluated directly at valid inputs, without estimating derivatives of the unknown target. This joint value-and-gradient characterization naturally induces an empirical first-order \textit{Sobolev geometry}, in which differences between functions are measured through both their values and first derivatives~\citep{czarnecki2017sobolev}. This provides a more discriminative notion of term representability, since a term is considered redundant only when the remaining terms can reproduce not only its observed values but also its local variation.

Specifically, we define the empirical Sobolev representation of $\phi_i$ as
\begin{equation}
\Psi_X(\phi_i)
=
\frac{1}{\sigma_s}
\left[
\sqrt{\frac{\lambda_0}{N}}\phi_i(X),
\sqrt{\frac{\lambda_1}{Nd}}
\operatorname{vec}\!\left(\nabla_x\phi_i(X)\right)
\right],
\label{eq:sobolev_signature}
\end{equation}
where $N=|X|$ is the number of observed inputs, $d$ is the input dimension, and $\sigma_s^2=N^{-1}\sum_{n=1}^{N}s(x_n)^2$. Specifically, $\phi_i(X)\in\mathbb{R}^{N}$ forms the value block, while $\nabla_x\phi_i(X)\in\mathbb{R}^{N\times d}$ forms the gradient block, whose entries are stacked by $\operatorname{vec}$ into a vector in $\mathbb{R}^{Nd}$. Hence, $\Psi_X(\phi_i)\in\mathbb{R}^{N(d+1)}$. The weights $\lambda_0$ and $\lambda_1$ control the relative contributions of the two blocks, and we use $\lambda_0=\lambda_1=1$ by default. All terms within a candidate share the same scale $\sigma_s$, placing their representations on a common candidate-level scale.

Equation~\ref{eq:sobolev_signature} induces the empirical Sobolev inner product $\langle f,g\rangle_{S,X}=\Psi_X(f)^\top\Psi_X(g)$, with the corresponding seminorm $\|f\|_{S,X}=\|\Psi_X(f)\|_2$. The resulting representation turns the structural relation among symbolic terms into a finite-dimensional geometry that simultaneously captures their observed values and local trends.

\paragraph{Definition of Sobolev Novelty.}
Throughout this work, we use the first-order representation by default; extending to higher-order Sobolev representations is straightforward by appending the corresponding derivative blocks. In this representation space, a coherent reconstruction must use the same linear combination of the remaining terms to jointly reproduce both the value and gradient blocks of $\phi_i$. The component that remains unmatched under this joint reconstruction defines its novelty. Specifically, let
\begin{equation}
\Psi_{-i}
=
\left[
\Psi_X(\phi_1),\ldots,
\Psi_X(\phi_{i-1}),
\Psi_X(\phi_{i+1}),\ldots,
\Psi_X(\phi_m)
\right]
\end{equation}
collect the representations of all terms except $\phi_i$. We define the \emph{Sobolev Novelty} of $\phi_i$ as its normalized leave-one-term-out reconstruction residual:
\begin{equation}
\nu_i
=
\min_{c\in\mathbb{R}^{m-1}}
\frac{
\left\|
\Psi_X(\phi_i)-\Psi_{-i}c
\right\|_2
}{
\left\|\Psi_X(\phi_i)\right\|_2
}
=
\frac{
\left\|
(I-P_{-i})\Psi_X(\phi_i)
\right\|_2
}{
\left\|\Psi_X(\phi_i)\right\|_2
},
\label{eq:sobolev_novelty}
\end{equation}
where $P_{-i}=\Psi_{-i}\Psi_{-i}^{+}$ denotes the orthogonal projection onto the span of the remaining term representations. In practice, Eq.~\ref{eq:sobolev_novelty} is computed using rank-revealing least squares without explicitly forming the projection matrix.

The score satisfies $0\leq\nu_i\leq1$. Values near zero indicate that the remaining terms can reconstruct most of the value-and-derivative behavior of $\phi_i$ using a common coefficient vector. Larger values indicate a stronger component that remains outside the span of the other terms. Sobolev Novelty thus measures how much distinct empirical behavior a term adds to the current candidate. All quantities in Eq.~\ref{eq:sobolev_novelty} are determined by the symbolic terms and observed inputs, making the score candidate-internal and requiring no access to the ground-truth equation.

\subsection{A Theory-Calibrated Threshold}

Sobolev Novelty provides a continuous measure of structural independence, which raises a practical question: \emph{how much novelty is sufficient for a term to constitute an independently identifiable direction?} To answer this question, we normalize each nonzero Sobolev signature as $u_i=\Psi_X(\phi_i)/\|\Psi_X(\phi_i)\|_2$, collect the normalized signatures in $U=[u_1,\ldots,u_m]$, and define their Gram matrix as $G=U^\top U$. Let $\mathcal{S}_{-i}=\operatorname{span}\{\phi_j:j\neq i\}$ denote the function space spanned by the remaining terms.
We first interpret Sobolev Novelty from two complementary perspectives: the projection perspective and the explanation stability perspective, and build on these interpretations to derive a threshold for guiding SR.

Specifically, for nonzero term signatures and a nonsingular normalized Gram matrix $G$, the two perspectives are connected by
\begin{equation}
\label{eq:sobolev_perspectives}
\nu_i^2
=
\underbrace{
    \|u_i\|_2^2-\|P_{-i}u_i\|_2^2
}_{\text{Projection Perspective}}
=
\underbrace{
    (G^{-1})_{ii}^{-1}
}_{\text{Explanation Stability Perspective}}.
\end{equation}
The detailed proof is provided in Appendix~\ref{app:theory}.

\paragraph{Projection Perspective.} \emph{How much new behavior does a term add beyond the remaining terms?}  In the first equality of Eq.~\ref{eq:sobolev_perspectives}, the normalized Sobolev signature $u_i$ represents the term's values and trends, while $P_{-i}u_i$ is the optimal linear combination of the remaining terms, representing the component they can explain. Because the projected component and the residual vector are orthogonal, subtracting the squared norm of the projected component from that of $u_i$ yields exactly $\nu_i^2$, the squared norm of the residual vector that the remaining terms cannot represent.  Higher novelty therefore corresponds to a smaller projection norm, meaning that less of the term's representation is captured by the remaining terms, leaving a larger residual; conversely, lower novelty corresponds to a smaller residual. When the projection is zero, $\nu_i=1$, indicating that the term's signature is orthogonal to the span of the remaining signatures.

\paragraph{Explanation Stability Perspective.}
\emph{Can a small change in overall behavior substantially change how its components are interpreted?}
An equation may accurately reproduce the observed behavior while having an unstable decomposition into individual contributions. When several terms have similar values and local trends, their contributions can change substantially through mutual compensation even while the equation's overall behavior remains nearly unchanged. This makes it difficult to determine how much of that behavior should be attributed to each term. This resembles separating similar sources from a mixed signal: the combined signal may be well determined while the individual contributions remain difficult to distinguish stably. The second equality asks whether we can still tell what role each term plays when the equation's overall behavior barely changes. When several terms behave similarly, even a slight change in the equation's output values and local trends can substantially change how much of that behavior is attributed to term $i$. Under the normalized representation above, we hold the functional forms of all terms fixed and examine how their refitted coefficients respond to small changes in the equation's function values and gradients. For term $i$, the maximum amplification factor is the largest possible ratio of the magnitude of its coefficient change to the size of the overall change. The square of this factor is exactly $(G^{-1})_{ii}$. Since $(G^{-1})_{ii}=1/\nu_i^2$, lower Sobolev Novelty corresponds to a larger maximum amplification factor, meaning that the term's coefficient can change substantially even when the equation's overall behavior changes only slightly.

Building on these perspectives, we can limit this instability by controlling Sobolev Novelty. Specifically, we require the square of the maximum amplification factor to be no more than one order of magnitude above the orthogonal baseline of $1$. This gives $1/\nu_i^2\le 10$, or equivalently $\nu_i\ge 1/\sqrt{10}$, yielding the threshold $\tau_{\mathrm{id}}=1/\sqrt{10}$.

This threshold admits both geometric and explanation-stability interpretations. For any term meeting the threshold, the residual vector that cannot be reconstructed by the remaining terms has a squared norm of at least $10\%$ of that of the term's full representation. At the same time, coefficient changes induced by changes in the equation's overall behavior are subject to the amplification bound above. The threshold therefore allows some behavioral overlap while flagging as low novelty those terms whose residuals are too small relative to their full representations and whose coefficient changes can be excessively amplified. Empirical statistics on scientific equations in Section~\ref{exp1} further validate the practical relevance of this criterion.

\section{Experiments}
In this section, we evaluate Sobolev Novelty through a sequence of questions. We first examine whether benchmark ground-truth equations and expressions produced by existing SR methods exhibit a systematic gap in term-level structural novelty (Section~\ref{exp1}). We then assess whether the same signal can guide symbolic search across evolutionary search, tree search, and LLM-guided tree search, using GP, MCTS, and IGSR as representative cases (Section~\ref{sec:search_guidance}). Next, we evaluate whether Sobolev Novelty can improve pretrained generative SR by filtering low-novelty synthetic formulas before training while leaving the model architecture and training pipeline unchanged (Section~\ref{sec:pretraining}). Finally, we enter a real scientific setting and evaluate Sobolev Novelty in turbulence modeling, a core problem in fluid mechanics that underpins reliable prediction of complex flows across science and engineering (Section~\ref{sec:turbulence}).

\paragraph{Datasets.}
We primarily evaluate on
SRBench~\citep{la2021contemporary} across our experiments.
Our evaluation includes 133 white-box scientific equation tasks with
analytic ground-truth expressions, comprising 119 Feynman tasks and
14 Strogatz tasks, as well as 122 black-box regression tasks for which
no reference analytic expression is provided.

\paragraph{Evaluation metrics.}
We measure predictive accuracy using the standard coefficient of determination on the test set,
$R^2 = 1 - \sum_n (y_n-\hat{y}_n)^2 / \sum_n (y_n-\bar{y})^2$.
Expression complexity $C$ is measured by the length of the parsed expression tree, defined as the total number of operator, variable, and constant nodes for each evaluated candidate expression. At the structural level, we report the mean term Sobolev Novelty and the proportion of terms satisfying $\nu_i>\tau_{\mathrm{id}}$.

\subsection{A Structural Novelty Gap in Existing SR Methods}
\label{exp1}

\paragraph{Methods and experimental setup.}
To examine whether expressions discovered by existing SR methods exhibit term-level structural independence comparable to that of benchmark ground-truth equations, we analyze the outputs of 15 representative methods on the 133 SRBench white-box tasks. The selected methods span the major search and learning paradigms. Among population-based approaches, AFP~\citep{schmidt2010age} uses age--fitness Pareto selection, AFP-FE~\citep{schmidt2008coevolution}
augments it with co-evolved fitness estimates, EPLEX adopts $\epsilon$-lexicase selection, and GP-GOMEA~\citep{virgolin2021improving} applies gene-pool optimal mixing. GPlearn, Operon~\citep{burlacu2020operon}, PySR~\citep{cranmer2023interpretable}, and SBP-GP~\citep{virgolin2019linear} perform genetic-programming search with different variation, selection, semantic, and implementation strategies, while FEAT~\citep{la2018learning} evolves nonlinear feature transformations that are combined through a linear model. BSR~\citep{jin2019bayesian} performs Bayesian search over symbolic structures, and AIFeynman2~\citep{udrescu2020ai_2} decomposes problems using physics-inspired heuristics such as separability and symmetry. DSR~\citep{petersen2019deep} and RSRM~\citep{xu2023rsrm} construct expressions using learned reinforcement-learning policies, whereas the pretrained neural methods NeurSR~\citep{biggio2021neural} and E2ESR~\citep{kamienny2022end} map numerical observations directly to symbolic expression sequences.

\begin{wrapfigure}{r}{0.4\columnwidth}
    \centering
    \vspace{-0.6em}
    \includegraphics[width=\linewidth]
    {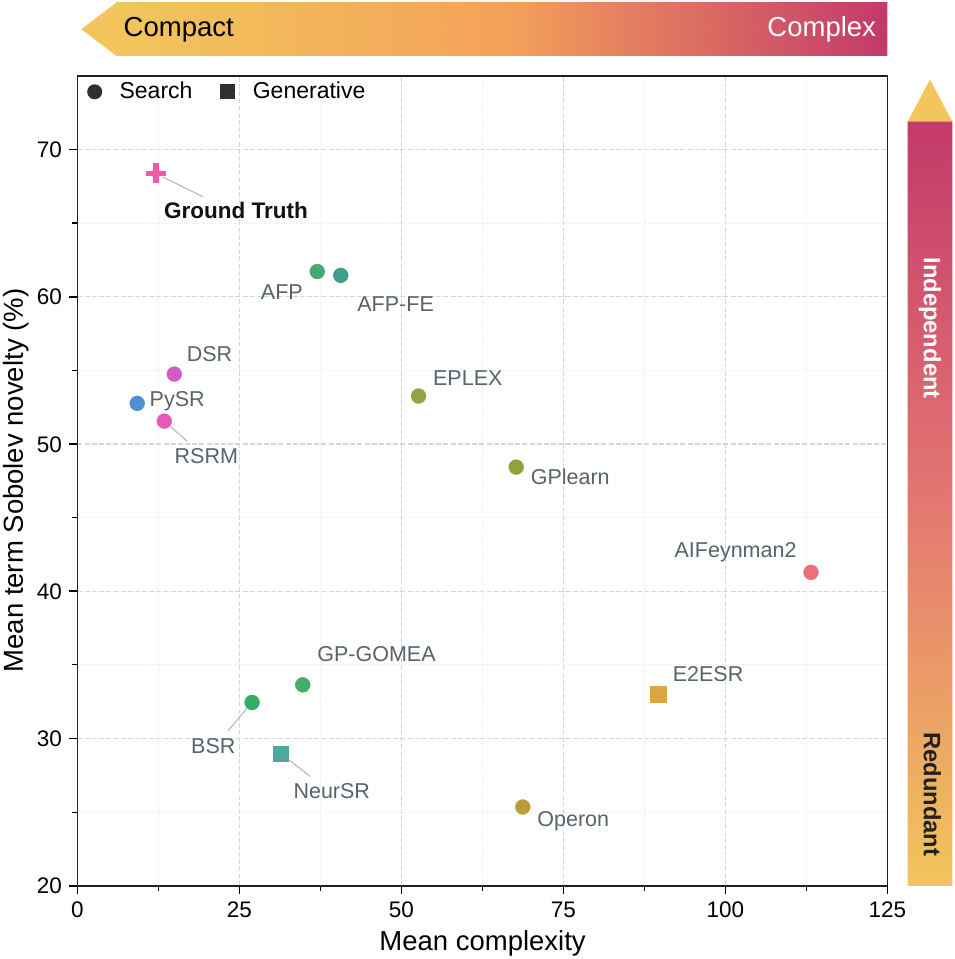}
    \caption{Mean complexity and term-level Sobolev Novelty across ground truth and SR methods.}
    \label{fig:complexity_novelty_gap}
    \vspace{-1.5em}
\end{wrapfigure}

Using this broad comparison set, we retain only discovered candidates with test $R^2 \geq 0.8$ and compute Sobolev Novelty for the ground-truth equation and all retained expressions using the same input samples within each task. The ground-truth expressions serve solely as post-hoc structural references and are not used to guide or supervise any of the methods. Complete per-method counts and term qualification rates are provided in
Appendix~\ref{app:qualification_statistics}.

\paragraph{Existing SR methods exhibit a substantial structural novelty gap.}
At the theory-calibrated threshold $\tau_{\mathrm{id}}$, $92.6\%$ of terms in the ground-truth equations satisfy the structural novelty criterion, compared with only $38.3\%$ on average across the 15 SR methods, yielding a gap of $54.3$ percentage points. The high qualification rate among ground-truth terms indicates that this criterion is consistent with the empirical structure of most benchmark scientific equations, whereas the much lower rate among discovered expressions reveals a substantial gap in term-level structural novelty.

\paragraph{High fitting quality and compact expressions do not guarantee high term-level novelty.}
As shown in Fig.~\ref{fig:complexity_novelty_gap}, among candidates with $R^2 \geq 0.8$, search-based methods still exhibit markedly different novelty--complexity profiles. AFP and AFP-FE~\citep{schmidt2008coevolution} are closest to the ground truth in mean term novelty, but the expressions they produce are substantially more complex than the ground-truth equations. DSR, RSRM, and PySR have mean expression complexity close to that of the ground-truth equations but exhibit intermediate novelty, whereas GP-GOMEA and Operon lie in lower-novelty regions. These contrasts show that satisfying the predictive-quality criterion, even with expression complexity comparable to that of the ground-truth equations, does not guarantee comparable term-level novelty. This motivates supplementing the original expression-level objectives with term-level feedback that directly evaluates whether a candidate term expands the current behavioral span.

\paragraph{Pretraining on random formulas can favor low-novelty expressions.}
NeurSR and E2ESR both occupy low-novelty regions, with E2ESR also showing substantially higher mean expression complexity than the ground-truth equations. These methods are pretrained on randomly generated formulas, yet random generation does not ensure structural independence among terms. The pretraining corpus can therefore contain low-novelty structures, which models may learn to reproduce even when generating predictively accurate expressions. This motivates using Sobolev Novelty to filter formulas containing low-novelty terms before pretraining, shifting the training distribution toward formulas with more structurally independent terms.

\subsection{Sobolev Novelty as Search Guidance}
\label{sec:search_guidance}

\paragraph{Experimental setups.}
To examine whether Sobolev Novelty can provide reusable term-level guidance across different symbolic search mechanisms, we integrate it into GP, MCTS, and IGSR~\citep{saveliev2026influence} while retaining the original expression-level objective of each method. GP generates candidate expressions through crossover and mutation; we use Sobolev Novelty to screen these candidates and favor those containing more structurally independent terms. MCTS grows a search tree through node selection and expansion; during expansion, Sobolev Novelty reranks newly generated child nodes so that branches introducing new term behavior are explored first. IGSR combines LLM-generated proposals with MCTS; before node expansion, we translate the novelty profile of the current terms into natural-language feedback that guides the LLM away from structural directions already represented by the current expression. We evaluate GP and MCTS on the 133 white-box and 122 black-box SRBench tasks, whereas IGSR is evaluated only on the black-box tasks to mitigate potential contamination from publicly available scientific formulas in the LLM's pretraining data. Full integration and implementation details are provided in Appendix~\ref{app:implementation}.

\paragraph{Sobolev Novelty improves accuracy and compactness on white-box tasks.}
As shown in Figure~\ref{fig:search_guidance_whitebox},
both Sobolev-guided variants move toward the lower-left region of the
accuracy--complexity plane, with a more pronounced shift for GP.
We attribute this joint improvement to finer-grained structural feedback.
In both backbones, the original predictive criterion first retains proposals
that are competitive in fitting the data, after which Sobolev Novelty
deprioritizes structures already covered by the current expression and favors
candidates with complementary value-and-derivative behavior.
This reduces the search and expression budgets spent on low-novelty terms and
focuses the search on directions that expand the current behavioral span.
Consequently, Sobolev Novelty directly targets term-level structural
independence, an objective distinct from the standard accuracy--complexity
trade-off, yet it synergistically improves both metrics.
Table~\ref{tab:app_whitebox_results} reports the complete numerical results.
Both SN-GP and SN-MCTS achieve higher mean term Sobolev Novelty than their
respective backbones, together with the accuracy and complexity results
summarized in Figure~\ref{fig:search_guidance_whitebox}.

\begin{figure*}[t]
    \centering
    \begin{subfigure}[t]{0.495\textwidth}
        \centering
        \includegraphics[width=\linewidth]{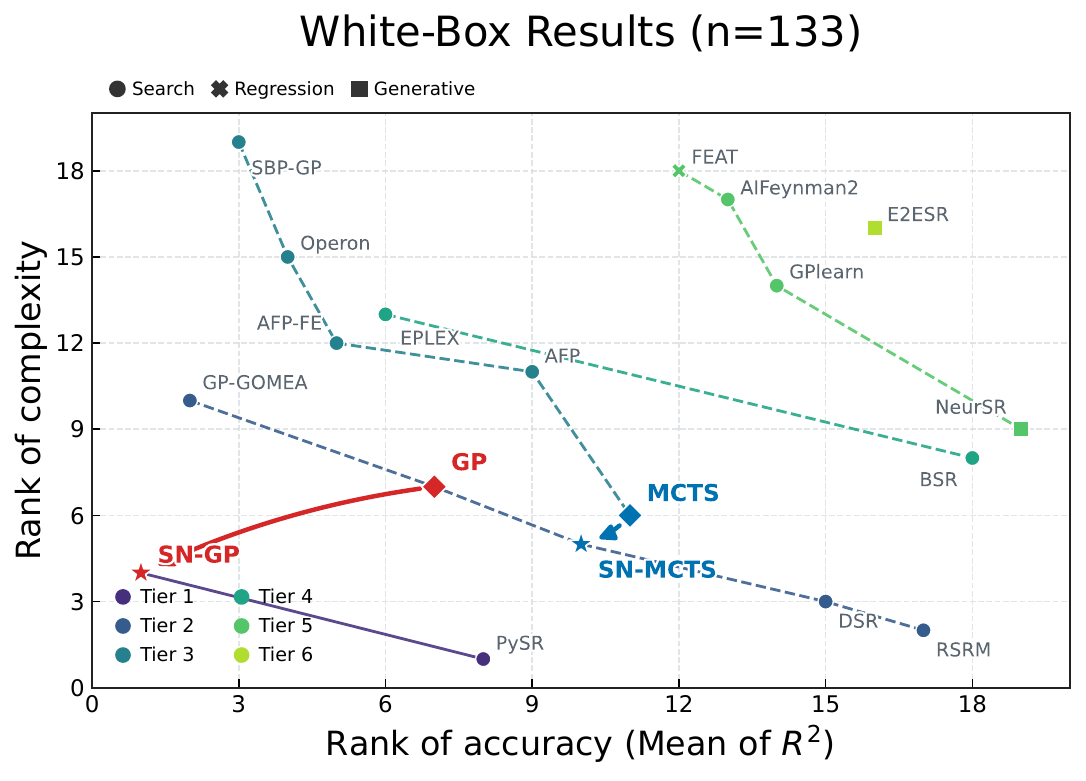}
        \caption{White-box results.}
        \label{fig:search_guidance_whitebox}
    \end{subfigure}
    \hfill
    \begin{subfigure}[t]{0.495\textwidth}
        \centering
        \includegraphics[width=\linewidth]{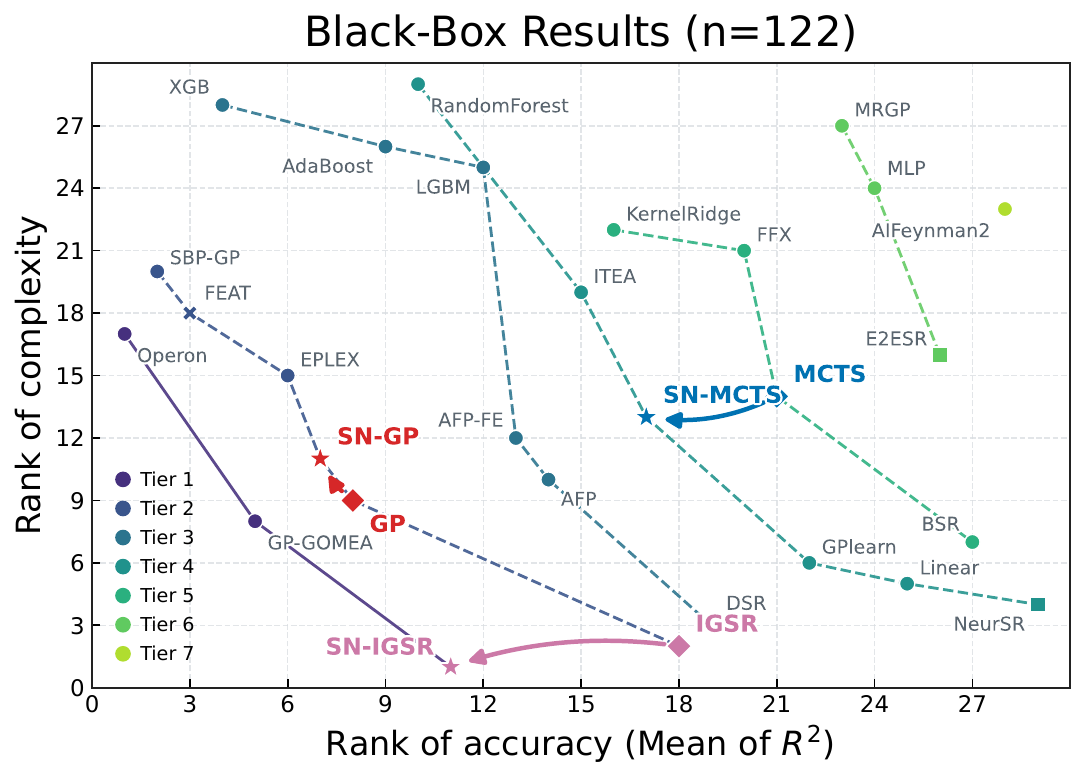}
        \caption{Black-box results.}
        \label{fig:search_guidance_blackbox}
    \end{subfigure}
    \caption{
        Accuracy--complexity rank comparisons on the SRBench white-box and
        black-box benchmarks.
        Lower is better on both axes.
        Dashed lines indicate Pareto tiers, and arrows connect each backbone
        with its Sobolev-guided variant.
        Accuracy ranks are computed from mean $R^2$, and complexity ranks from
        mean expression complexity.
    }
    \label{fig:search_guidance}
\end{figure*}

\paragraph{Sobolev guidance remains effective on black-box tasks, with backbone-dependent complexity effects.}
Figure~\ref{fig:search_guidance_blackbox}
summarizes the results on 122 black-box tasks.
SN-GP, SN-MCTS, and SN-IGSR all achieve higher mean $R^2$ than their
respective backbones, showing that term-level structural feedback improves
search performance across these three methods.
SN-MCTS moves toward the lower-left region of the accuracy--complexity plane,
improving both accuracy and compactness, whereas SN-GP gains accuracy with a
modest increase in complexity.

\paragraph{Sobolev feedback improves LLM-guided search without enlarging expressions.}
IGSR already employs a drop-one influence score to assess whether a generated
term is predictively useful for fitting the current target.
We preserve this predictive selection mechanism and use Sobolev Novelty only
to guide which structural directions the LLM should propose next.
The two signals therefore play distinct roles: predictive influence determines
which existing terms should be retained, whereas Sobolev Novelty identifies
which behaviors are already represented and which directions remain to be
explored.
Under the same task set, random seed, and search budget, SN-IGSR moves
substantially leftward along the accuracy axis in
Figure~\ref{fig:search_guidance}\subref{fig:search_guidance_blackbox},
attaining a better accuracy rank while preserving the same complexity.
This shows that the improvement is not obtained by generating more terms.
More broadly, this result demonstrates that Sobolev Novelty need not be tied
to a numerical reward: the same term-level structural signal can be translated
into natural-language feedback and directly used to improve LLM-guided
symbolic search.

\subsection{Sobolev Novelty as Pretraining Guidance}
\label{sec:pretraining}

\begin{wrapfigure}{r}{0.38\textwidth}
    \centering
    \includegraphics[
        width=\linewidth
    ]{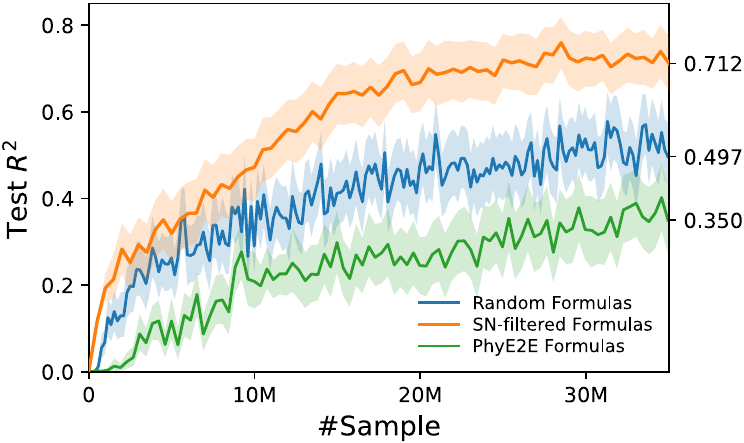}
    \caption{
        Generalization performance of E2ESR under different
        pretraining formula distributions. Endpoint $R^2$
        values are annotated on the right.
    }
    \label{fig:pretraining_curve}
    \vspace{-1em}
\end{wrapfigure}

Pretrained generative SR methods commonly rely on large-scale randomly generated formula--data pairs~\citep{biggio2021neural,kamienny2022end,meidani2024snip}. Although random formula generators provide a continuous supply of diverse structures, they do not ensure that the additive terms within each formula contribute distinct functional behavior. Consequently, a random pretraining corpus may contain many terms that are syntactically different yet strongly overlap with one another in their function values and local variation patterns. We address this problem through Sobolev Novelty-guided rejection sampling. Before a generated formula enters pretraining, we filter out samples containing low-novelty terms, allowing the model to learn from formulas whose constituent terms provide more distinguishable structural behavior.

\paragraph{Experimental setups.}
We use Sobolev Novelty for rejection sampling at the entrance to the E2ESR pretraining pipeline. Each formula is decomposed into additive terms; single-term formulas are accepted directly, whereas multi-term formulas are retained only when $\min_i\nu_i>\tau_{\mathrm{id}}$. The tokenizer, training objective, optimizer, and inference pipeline remain unchanged throughout the pretraining process. Under the same protocol, we compare Random, SN-filtered, and PhyE2E formulas. The PhyE2E formulas are taken from the pretraining corpus introduced by Ying et al.~\citeyearpar{ying2025neural}. We train three independent runs for each setting and evaluate checkpoints from 1M to 35M accepted samples on the 133 SRBench white-box tasks using mean $R^2$.

\paragraph{Sobolev filtering improves pretraining generalization.}
Figure~\ref{fig:pretraining_curve} shows the generalization performance of E2ESR under different pretraining formula distributions. Compared with Random and the PhyE2E-based baseline, models trained on SN-filtered formulas attain a substantially higher $R^2$ during the later stages of training and maintain a clear final-performance advantage under the same pretraining budget.
At the final 35M checkpoint, SN-filtered, Random, and PhyE2E formulas achieve mean $R^2$ values of $0.712$, $0.497$, and $0.350$, respectively. Relative to the $0.497$ obtained with Random formulas under the same budget, this corresponds to an absolute improvement of $0.215$ and a relative gain of $43.3\%$. Because this experiment changes only the distribution of formulas entering pretraining, without modifying the E2ESR architecture, training objective, or inference procedure, the results show that term-level pretraining guidance alone can substantially improve the final generalization performance of generative SR methods.

\subsection{Sobolev Novelty Enables Scientific Discovery in Turbulence Modeling}
\label{sec:turbulence}

To assess Sobolev Novelty in a real scientific discovery setting, we apply it to turbulence modeling for periodic-hill flow. The task extends beyond isolated input--output fitting because the discovered constitutive equation must close the governing equations~\citep{schmelzer2020discovery}. Periodic-hill flow presents a complex geometry in which adverse pressure gradients and streamline curvature generate separated shear layers, recirculation bubbles, and downstream reattachment, accompanied by pronounced non-equilibrium effects and Reynolds-stress anisotropy \citep{xiao2020flows}. The final physical prediction is obtained only after the discovered closure is embedded in a RANS solver. In this coupled setting, closure errors can propagate and be amplified; a closure with high fitting accuracy can still yield an ill-conditioned coupled system or severely distorted flow-field predictions \citep{wu2019reynolds}. The evaluation tests whether a discovered equation can still produce accurate system-level physical predictions after it is integrated into the governing equations and exposed to the input distribution induced by solver feedback.

\paragraph{Experimental setups.}
We consider the basic MCTS framework introduced above and DSRRANS~\citep{tang2023discovering}, a domain-specific SR method for turbulence modeling. Conventional RANS serves as the fluid-mechanics baseline, and DNS provides the high-fidelity reference. Appendix~\ref{app:turbulence_details} provides details of data processing, constitutive modeling, and solver configurations.

\paragraph{Sobolev Novelty substantially strengthens scientific discovery.}
We report two key physical quantities for the periodic-hill flow: the reattachment position and the maximum recirculation-zone velocity magnitude. These quantities characterize the downstream extent of the separation region and its recirculation strength, respectively. Accuracy is measured by the absolute deviation from DNS. As shown in Table~\ref{tab:turbulence_topology}, SN-DSRRANS achieves the closest agreement with DNS for both quantities. Relative to DSRRANS, it reduces the absolute reattachment-position error from 1.91 to 0.53 and the recirculation-velocity error from $2.52\times10^{-3}$ to $0.38\times10^{-3}\,\mathrm{m/s}$, corresponding to reductions of 72.3\% and 84.9\%, respectively. The MCTS framework exhibits the same pattern: SN-MCTS reduces the corresponding errors by 7.1\% and 60.8\%. Moreover, SN-MCTS is closer to DNS than both RANS and DSRRANS for both quantities. This is especially notable because MCTS contains no turbulence-specific inductive bias. With term-level structural novelty as the only additional guidance, this basic symbolic search framework discovers a constitutive relation whose a posteriori predictions surpass those of both a traditional turbulence model and a domain-specific SR method on the reported physical observables. These results provide direct evidence that Sobolev Novelty can endow even a basic symbolic search framework with strong scientific discovery capability.

\begin{figure}[t]
    \centering

    \captionsetup{
        font=footnotesize,
        labelfont=bf,
        justification=justified,
        singlelinecheck=false,
        skip=0pt
    }
    \captionsetup[table]{position=bottom}
    \captionsetup[figure]{position=bottom}

\begin{lrbox}{\turbulencetablebox}
    \scriptsize
    \setlength{\tabcolsep}{3.0pt}
    \renewcommand{\arraystretch}{1.20}

    \begin{tabular}{@{}lcc@{}}
        \toprule

        Method
        & \shortstack{
            Reattachment Point\\
            ($x/H$)
        }
        & \shortstack{
            Max Recirculation Vel.\\
            ($U_x$, $10^{-3}\,\mathrm{m/s}$)
        }
        \\

        \midrule

        DNS
        & 5.28
        & 7.39
        \\

        RANS
        & 7.36\,\dnserror{\ensuremath{\Delta\,2.08}}
        & 5.81\,\dnserror{\ensuremath{\Delta\,1.58}}
        \\

        DSRRANS
        & 7.19\,\dnserror{\ensuremath{\Delta\,1.91}}
        & 9.91\,\dnserror{\ensuremath{\Delta\,2.52}}
        \\

        SN-DSRRANS
        & \textbf{4.75}\,
          \dnserror{\ensuremath{\boldsymbol{\Delta\,0.53}}}
        & \textbf{7.01}\,
          \dnserror{\ensuremath{\boldsymbol{\Delta\,0.38}}}
        \\

        MCTS
        & 4.43\,\dnserror{\ensuremath{\Delta\,0.85}}
        & 9.20\,\dnserror{\ensuremath{\Delta\,1.81}}
        \\

        SN-MCTS
        & \underline{4.49}\,
          \dnserror{\ensuremath{\underline{\Delta\,0.79}}}
        & \underline{8.10}\,
          \dnserror{\ensuremath{\underline{\Delta\,0.71}}}
        \\

        \bottomrule
    \end{tabular}
\end{lrbox}

    \sbox{\turbulenceimagebox}{\includegraphics{
            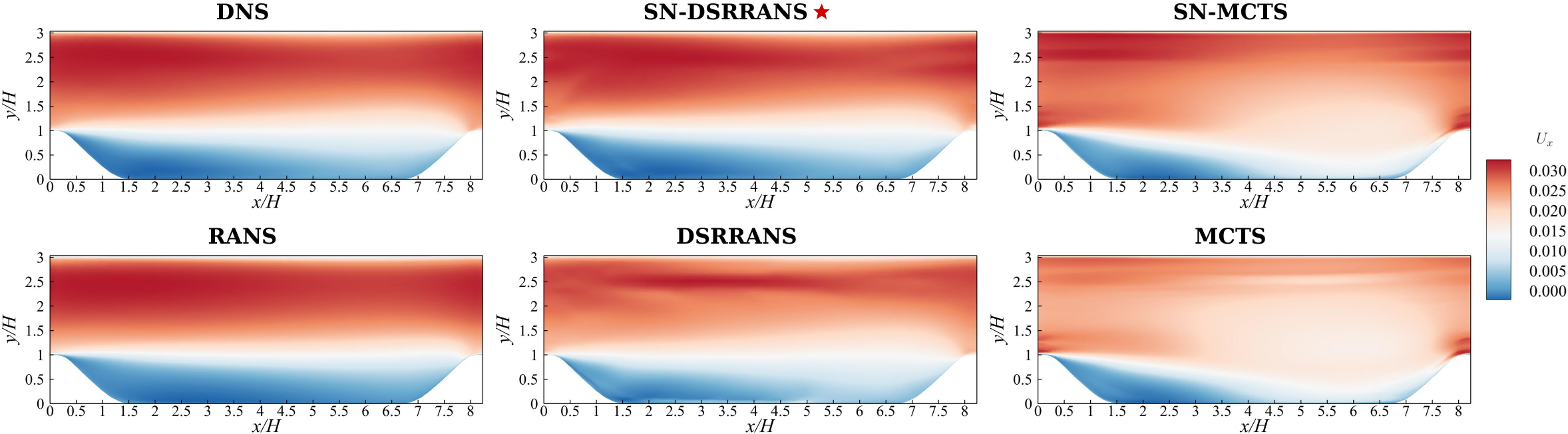
        }}

    \setlength{\turbulencetablewidth}{\wd\turbulencetablebox
    }

    \setlength{\turbulencetableheight}{\dimexpr
        \ht\turbulencetablebox
        +
        \dp\turbulencetablebox
        \relax
    }

    \setlength{\turbulenceimagewidth}{\wd\turbulenceimagebox
    }

    \setlength{\turbulenceimageheight}{\dimexpr
        \ht\turbulenceimagebox
        +
        \dp\turbulenceimagebox
        \relax
    }

    \setlength{\turbulencegap}{0.02\linewidth}

    \computeturbulencecommonheight

    \sbox{\turbulencescaledtablebox}{\resizebox*{!}{\turbulencebodyheight}{\usebox{\turbulencetablebox}}}

    \sbox{\turbulencescaledimagebox}{\resizebox*{!}{\turbulencebodyheight}{\usebox{\turbulenceimagebox}}}

    \setlength{\turbulenceleftwidth}{\wd\turbulencescaledtablebox
    }

    \setlength{\turbulencerightwidth}{\wd\turbulencescaledimagebox
    }

    \begin{minipage}[t]{\turbulenceleftwidth}
        \vspace{0pt}
        \centering

        \usebox{\turbulencescaledtablebox}

        \par\nointerlineskip
        \vspace{3pt}

        \captionof{table}{
    Comparison of reattachment position and maximum
    recirculation-zone velocity in periodic-hill flow.
}
        \label{tab:turbulence_topology}
    \end{minipage}\hfill
    \begin{minipage}[t]{\turbulencerightwidth}
        \vspace{0pt}
        \centering

        \usebox{\turbulencescaledimagebox}

        \par\nointerlineskip
        \vspace{3pt}

        \captionof{figure}{
            A posteriori streamwise velocity fields for periodic-hill flow
            obtained with DNS, RANS, DSRRANS, SN-DSRRANS, MCTS, and SN-MCTS.
        }
        \label{fig:turbulence_velocity}
    \end{minipage}
    \vspace{-1em}
\end{figure}

\paragraph{Sobolev Novelty improves a posteriori velocity-field predictions.}
The streamwise velocity fields in Figure~\ref{fig:turbulence_velocity} corroborate the quantitative results. The DSRRANS prediction contains an abrupt and locally concentrated high-velocity band in the upper channel that is absent from DNS. SN-DSRRANS suppresses this anomalous concentration and recovers a broader high-velocity region, a more continuous streamwise distribution, and a smoother transition toward the channel interior, while preserving the morphology and vortical structures of the periodic-hill separation region. These fields are obtained after the constitutive relation is embedded into the RANS solver and coupled with the governing equations. Their agreement with DNS consequently evaluates system-level physical fidelity under coupled simulation, beyond pointwise fitting of the training data. Together, the quantitative and field-level results show that Sobolev Novelty guides SR toward constitutive relations that close the governing equations and reproduce key turbulent-flow structures with high fidelity, providing direct evidence of scientific discovery capability in a high-fidelity physical modeling task.

\section{Conclusion}

We introduce Sobolev Novelty, a term-level measure of structural independence that jointly characterizes function values and derivatives to quantify the distinct behavior each term contributes beyond the remaining expression. We prove its connections to independent Sobolev energy, deletion-and-refit cost, and coefficient identifiability, establishing a theory-calibrated criterion for structural novelty. Empirical analysis reveals a substantial novelty gap between scientific equations and current SR solutions, validating the practical relevance of this perspective. Across a series of experiments, Sobolev Novelty consistently improves search-based and generative SR methods through search guidance and pretraining data selection, demonstrating broad applicability. Going further, we enter a real scientific setting through turbulence modeling in fluid mechanics. The discovered constitutive relations yield more accurate flow predictions in coupled physical simulations, demonstrating Sobolev Novelty's ability to advance scientific discovery. Future work will explore higher-order Sobolev representations and extend this framework to a broader range of scientific discovery tasks.

\bibliography{conference}
\bibliographystyle{conference}

\appendix

\section{Theoretical Foundations and Complete Proofs}
\label{app:theory}
\numberwithin{equation}{section}
\setcounter{equation}{0}

This appendix provides the complete derivation of the Projection
Perspective and the Explanation Stability Perspective, and establishes
the resulting theory-calibrated threshold. We then prove the exact
deletion-and-refit characterization stated in the main text.

\subsection{Problem Setting and Empirical Sobolev Geometry}
\label{app:geometry}

Let $X=\{x_n\}_{n=1}^{N}$ be the observed inputs, where $N,d\ge1$
are integers, $x_n\in\Omega\subset\mathbb R^d$, and $\Omega$ is open.
Fix a candidate expression and its additive decomposition
\begin{equation}
    s(x)=\sum_{j=1}^{m}b_j\phi_j(x),
    \qquad b_j\in\mathbb R,\quad
    \phi_j\in C^1(\Omega),\quad m\ge1.
    \label{eq:app_decomposition}
\end{equation}
The terms, including their internal parameters, are fixed throughout
the analysis. Only their outer linear coefficients are varied in
reconstruction and refitting. All signatures use the same input samples,
input coordinates, and positive weights $\lambda_0,\lambda_1$.
As in the main text, define
\begin{equation}
    \sigma_s=
    \left(\frac1N\sum_{n=1}^{N}s(x_n)^2\right)^{1/2}>0.
    \label{eq:app_scale}
\end{equation}
This scale is fixed by the original candidate $s$ and is used for every
function compared below; it is not recomputed when coefficients change.

\begin{sndefinition}[Empirical Sobolev signature]
\label{def:app_signature}
For $f\in C^1(\Omega)$, define the column vector
\begin{equation}
    \Psi_X(f)=\frac1{\sigma_s}
    \begin{bmatrix}
        \sqrt{\lambda_0/N}\,f(X)\\[2pt]
        \sqrt{\lambda_1/(Nd)}\,
        \operatorname{vec}(\nabla_x f(X))
    \end{bmatrix}
    \in\mathbb R^p,
    \qquad p=N(d+1),
    \label{eq:app_signature}
\end{equation}
where $f(X)=(f(x_1),\ldots,f(x_N))^\top$ and
$\nabla_x f(X)\in\mathbb R^{N\times d}$.
The vectorization order is fixed across all functions.
Define
\begin{equation}
    \langle f,g\rangle_{S,X}=\Psi_X(f)^\top\Psi_X(g),
    \qquad
    \|f\|_{S,X}=\|\Psi_X(f)\|_2.
    \label{eq:app_seminorm}
\end{equation}
\end{sndefinition}

\begin{snlemma}[Linearity and the induced seminorm]
\label{lem:app_linearity}
The map $\Psi_X$ is linear. The form
$\langle\cdot,\cdot\rangle_{S,X}$ is symmetric, bilinear, and positive
semidefinite, and $\|\cdot\|_{S,X}$ is a seminorm.
\end{snlemma}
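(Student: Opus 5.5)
The plan is to reduce everything to linearity of $\Psi_X$ and then pull back the standard Euclidean structure on $\mathbb R^p$. I will first check that $\Psi_X$ is linear on the real vector space $C^1(\Omega)$, and then read off the properties of $\langle\cdot,\cdot\rangle_{S,X}$ and $\|\cdot\|_{S,X}$ from those of the dot product and the Euclidean norm.

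\textbf{Linearity of $\Psi_X$.} Fix $f,g\in C^1(\Omega)$ and $\alpha,\beta\in\mathbb R$. Point evaluation is linear, so $(\alpha f+\beta g)(X)=\alpha f(X)+\beta g(X)$. Differentiation is linear on $C^1(\Omega)$, so at each $x_n$ we have $\nabla_x(\alpha f+\beta g)(x_n)=\alpha\nabla_x f(x_n)+\beta\nabla_x g(x_n)$. Hence $\nabla_x(\alpha f+\beta g)(X)=\alpha\nabla_x f(X)+\beta\nabla_x g(X)$ as $N\times d$ matrices. The map $\operatorname{vec}$ uses a fixed ordering across all functions, so it is a linear map $\mathbb R^{N\times d}\to\mathbb R^{Nd}$. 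Each block is then multiplied by a constant that does not depend on $f$. This is the one point needing care: $\sigma_s>0$ is fixed by the original candidate $s$ via \eqref{eq:app_scale} and is not recomputed, and $\lambda_0,\lambda_1,N,d$ are fixed. Stacking linear blocks gives a linear map, so $\Psi_X(\alpha f+\beta g)=\alpha\Psi_X(f)+\beta\Psi_X(g)$.

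\textbf{The inner product and seminorm.} The form $\langle f,g\rangle_{S,X}=\Psi_X(f)^\top\Psi_X(g)$ is the Euclidean inner product composed with the linear map $\Psi_X$.
\begin{itemize}
\item Symmetry follows from symmetry of the dot product.
\item Bilinearity follows from linearity of $\Psi_X$ together with bilinearity of the dot product.
\item Positive semidefiniteness holds because $\langle f,f\rangle_{S,X}=\|\Psi_X(f)\|_2^2\ge0$.
\end{itemize}
For the seminorm, $\|f\|_{S,X}=\|\Psi_X(f)\|_2\ge0$. Absolute homogeneity holds because $\|\alpha f\|_{S,X}=\|\alpha\Psi_X(f)\|_2=|\alpha|\,\|f\|_{S,X}$. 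The triangle inequality holds because $\|f+g\|_{S,X}=\|\Psi_X(f)+\Psi_X(g)\|_2\le\|f\|_{S,X}+\|g\|_{S,X}$, by the triangle inequality in $\mathbb R^p$.

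\textbf{Why only a seminorm.} It is not a norm in general, since $\Psi_X$ can have a nontrivial kernel. Any nonzero $C^1$ function that vanishes to first order at every sample point has $\|f\|_{S,X}=0$. This is why the lemma claims only positive semidefiniteness.
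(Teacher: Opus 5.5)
Your proof is correct and follows the same route as the paper's. You get linearity of $\Psi_X$ from linearity of point evaluation and differentiation with the weights and the scale $\sigma_s$ held fixed, and you pull back the bilinear-form properties and seminorm axioms from the Euclidean inner product and norm on $\mathbb R^p$; your concrete kernel example (a nonzero function vanishing to first order at every sample) simply makes explicit the paper's remark that finite samples of values and derivatives need not distinguish distinct functions.
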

\begin{proof}
Function evaluation and differentiation are linear. Since the weights
and scale are fixed, for any $a,b\in\mathbb R$ and
$f,g\in C^1(\Omega)$,
\[
    \Psi_X(af+bg)=a\Psi_X(f)+b\Psi_X(g).
\]
The assertions about the bilinear form follow from the Euclidean
inner product. Absolute homogeneity and the triangle inequality follow
from the Euclidean norm. Finite samples of values and derivatives need
not distinguish distinct functions, so positive definiteness on
$C^1(\Omega)$ is not guaranteed.
\end{proof}

All orthogonal projections below are taken in the finite-dimensional
signature space with its Euclidean inner product. This remains
well-defined even though the induced function-space quantity is only
a seminorm. We assume $\Psi_X(\phi_j)\ne0$ for each term to be normalized.
Neither target values nor target derivatives are needed to construct
these signatures or to compute novelty.

\subsection{Sobolev Novelty and the Projection Perspective}
\label{app:projection}

Write
\begin{equation}
    \begin{gathered}
        \psi_j=\Psi_X(\phi_j),\qquad
        \ell_j=\|\psi_j\|_2>0,\qquad u_j=\psi_j/\ell_j,\\
        U=[u_1,\ldots,u_m],\qquad G=U^\top U.
    \end{gathered}
    \label{eq:app_normalized}
\end{equation}
For a fixed index $i$, let $\Psi_{-i}$ and $U_{-i}$ be the matrices
obtained by removing the $i$th columns from
$[\psi_1,\ldots,\psi_m]$ and $U$, respectively. Define
\begin{equation}
    W_{-i}=\operatorname{span}\{u_j:j\ne i\}
          =\operatorname{range}(\Psi_{-i}),
    \qquad P_{-i}=\Psi_{-i}\Psi_{-i}^{+},
    \label{eq:app_projection}
\end{equation}
where $+$ denotes the Moore--Penrose pseudoinverse.
The Moore--Penrose identities give $P_{-i}^\top=P_{-i}$,
$P_{-i}^2=P_{-i}$, and $\operatorname{range}(P_{-i})=W_{-i}$;
thus $P_{-i}$ is the orthogonal projection onto $W_{-i}$.
For $m=1$, the reference space is $\{0\}$ and $P_{-i}=0$.

\begin{sndefinition}[Sobolev Novelty]
\label{def:app_novelty}
The Sobolev Novelty of term $\phi_i$ relative to the remaining terms is
\begin{equation}
    \nu_i=\min_{c\in\mathbb R^{m-1}}
    \frac{\|\psi_i-\Psi_{-i}c\|_2}{\ell_i}.
    \label{eq:app_novelty}
\end{equation}
The same coefficient vector $c$ acts on both the value and gradient
blocks. Let $r_i=(I-P_{-i})u_i$ denote the residual vector of the
normalized signature.
\end{sndefinition}

\begin{snproposition}[Projection residual and independent energy]
\label{prop:app_projection}
Without requiring $U$ to have full column rank,
\begin{equation}
    \nu_i=\|r_i\|_2,
    \qquad
    \nu_i^2=\|u_i\|_2^2-\|P_{-i}u_i\|_2^2
           =1-\|P_{-i}u_i\|_2^2.
    \label{eq:app_energy}
\end{equation}
Consequently, $0\le\nu_i\le1$; $\nu_i=0$ if and only if
$u_i\in W_{-i}$, and $\nu_i=1$ if and only if $u_i\perp W_{-i}$.
In particular, a nonzero single-term signature has novelty $1$.
\end{snproposition}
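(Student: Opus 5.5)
The plan is to reduce the minimization in Definition~\ref{def:app_novelty} to an orthogonal-projection problem in the Euclidean signature space $\mathbb R^p$, and then apply the Pythagorean identity. First, I would remove the normalization. Since $\psi_i=\ell_i u_i$ and $\ell_i>0$,
\[
\frac{\|\psi_i-\Psi_{-i}c\|_2}{\ell_i}=\bigl\|u_i-\Psi_{-i}(c/\ell_i)\bigr\|_2 .
\]
As $c$ ranges over $\mathbb R^{m-1}$, the vector $\Psi_{-i}(c/\ell_i)$ ranges over all of $\operatorname{range}(\Psi_{-i})=W_{-i}$. Therefore $\nu_i=\min_{w\in W_{-i}}\|u_i-w\|_2$.

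This reformulation never uses injectivity of $\Psi_{-i}$, so full column rank of $U$ is not needed. Many $c$ may map to the same $w$, but the minimum depends only on the subspace. The minimum exists because $W_{-i}$ is a closed (finite-dimensional) subspace. For $m=1$, the space is $W_{-i}=\{0\}$ and the objective is constant.

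Second, I would invoke the best-approximation property of orthogonal projection. The text has already verified that $P_{-i}=\Psi_{-i}\Psi_{-i}^+$ is symmetric, idempotent, and has range $W_{-i}$. Fix any $w\in W_{-i}$ and write
\[
u_i-w=(I-P_{-i})u_i+(P_{-i}u_i-w).
\]
The first summand is orthogonal to $W_{-i}$ because $P_{-i}^\top(I-P_{-i})=P_{-i}-P_{-i}^2=0$. The second summand lies in $W_{-i}$. Hence
\[
\|u_i-w\|_2^2=\|r_i\|_2^2+\|P_{-i}u_i-w\|_2^2 .
\]
This is minimized exactly at $w=P_{-i}u_i$, which gives $\nu_i=\|r_i\|_2$. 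Applying the same orthogonal split to $u_i=P_{-i}u_i+r_i$ gives $\|u_i\|_2^2=\|P_{-i}u_i\|_2^2+\|r_i\|_2^2$, and $\|u_i\|_2=1$ yields the stated energy identity. I expect this step to be the only substantive one. Its one delicate point is making sure the orthogonality is justified through the Moore--Penrose identities rather than through a normal-equations formula that assumes $\Psi_{-i}^\top\Psi_{-i}$ is invertible.

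Finally, the consequences follow directly from the energy identity.
\begin{itemize}
\item The identity gives $\nu_i^2\in[0,1]$, since $\|P_{-i}u_i\|_2\le\|u_i\|_2=1$.
\item $\nu_i=0$ if and only if $r_i=0$, that is, $u_i=P_{-i}u_i\in W_{-i}$. Conversely, if $u_i\in W_{-i}$ then $P_{-i}u_i=u_i$.
\item $\nu_i=1$ if and only if $P_{-i}u_i=0$. By symmetry of $P_{-i}$, this is equivalent to $\langle u_i,w\rangle=\langle u_i,P_{-i}w\rangle=\langle P_{-i}u_i,w\rangle=0$ for all $w\in W_{-i}$, that is, $u_i\perp W_{-i}$.
\item For $m=1$, the projection is $P_{-i}=0$, so a nonzero single-term signature has novelty $1$.
\end{itemize}
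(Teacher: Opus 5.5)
Your proof is correct and follows essentially the same route as the paper: you split the residual into the component $(I-P_{-i})u_i$, which is orthogonal to $W_{-i}$, plus a component lying in $W_{-i}$, apply Pythagoras to see that the minimum equals $\|r_i\|_2$, and then use $u_i=P_{-i}u_i+r_i$ with $\|u_i\|_2=1$. The only differences are cosmetic. You normalize by $\ell_i$ and minimize over $w\in W_{-i}$ first, whereas the paper works with $\psi_i-\Psi_{-i}c$ and divides by $\ell_i$ at the end, and you spell out the endpoint characterizations that the paper calls immediate.
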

\begin{proof}
For any $c$, the two vectors in the decomposition
\[
    \psi_i-\Psi_{-i}c
    =(I-P_{-i})\psi_i+(P_{-i}\psi_i-\Psi_{-i}c)
\]
are orthogonal. Hence
\[
    \|\psi_i-\Psi_{-i}c\|_2^2
    =\|(I-P_{-i})\psi_i\|_2^2
     +\|P_{-i}\psi_i-\Psi_{-i}c\|_2^2.
\]
Because $P_{-i}\psi_i\in\operatorname{range}(\Psi_{-i})$, the second
term can be made zero. Dividing the minimum residual norm by $\ell_i$
gives $\nu_i=\|(I-P_{-i})u_i\|_2$.
The orthogonal decomposition $u_i=P_{-i}u_i+r_i$ and
$\|u_i\|_2=1$ imply~\eqref{eq:app_energy}.
The range and endpoint characterizations follow immediately.
\end{proof}

Here, energy means the squared Euclidean norm of a Sobolev signature.
Thus $\nu_i^2$ is exactly the fraction of the term's empirical
value-and-gradient energy that cannot be reconstructed by the remaining
terms. Increasing novelty decreases the projected energy and increases
the residual energy.

\subsection{Term Identifiability and the Explanation Stability Perspective}
\label{app:proof_identifiability}

In this subsection and the threshold derivation below, assume that $U$
has full column rank, equivalently that $G$ is nonsingular, as in the
main text. To formalize attribution of overall behavior to individual
terms, keep $U$ fixed and consider, for $v\in\mathbb R^p$,
\begin{equation}
    \widehat\alpha(v)=\operatorname*{arg\,min}_{\alpha\in\mathbb R^m}
    \|v-U\alpha\|_2^2.
    \label{eq:app_attribution_ls}
\end{equation}
Since $G$ is positive definite, this least-squares problem has a unique
minimizer. Define the contribution attributed to term $i$ as
\begin{equation}
    q_i(v)=\widehat\alpha_i(v)u_i.
    \label{eq:app_attribution}
\end{equation}
For the original candidate,
$\Psi_X(s)=\sum_j b_j\ell_j u_j$, so uniqueness gives
$q_i(\Psi_X(s))=b_i\Psi_X(\phi_i)$.
Thus $q_i$ describes the contribution of term $i$ to the candidate's
value-and-gradient behavior.

\begin{sntheorem}[Exact attribution sensitivity]
\label{thm:app_identifiability}
Under the full-column-rank assumption above, $\nu_i>0$ for every term $i$,
and
\begin{equation}
    \widehat\alpha_i(v)=\frac{r_i^\top v}{\nu_i^2}.
    \label{eq:app_attribution_formula}
\end{equation}
The worst-case attribution sensitivity satisfies
\begin{equation}
    \kappa_i:=\sup_{h\in\mathbb R^p\setminus\{0\}}
    \frac{\|q_i(v+h)-q_i(v)\|_2}{\|h\|_2}
    =\frac1{\nu_i}.
    \label{eq:app_attribution_sensitivity}
\end{equation}
The supremum is attained by $h=t r_i$ for any $t\ne0$.
\end{sntheorem}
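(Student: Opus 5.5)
The argument uses the orthogonal decomposition from Proposition~\ref{prop:app_projection}. Full column rank of $U$ means $u_i\notin W_{-i}$, because otherwise there would be a nontrivial linear dependence among the columns. Proposition~\ref{prop:app_projection} then gives $\nu_i=\|r_i\|_2>0$ for every $i$.

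For the attribution formula~\eqref{eq:app_attribution_formula}, I will use the normal equations characterizing the unique minimizer $\widehat\alpha=\widehat\alpha(v)$: the residual $e=v-U\widehat\alpha$ is orthogonal to every $u_j$. Next I take the inner product of $v=U\widehat\alpha+e$ with $r_i=(I-P_{-i})u_i$. Since $r_i\perp W_{-i}$, we have $r_i^\top u_j=0$ for $j\ne i$. Since $r_i=u_i-P_{-i}u_i$ lies in $\operatorname{span}\{u_1,\ldots,u_m\}$, it is orthogonal to $e$, so $r_i^\top e=0$. The surviving term is $r_i^\top u_i=\|r_i\|_2^2+(P_{-i}u_i)^\top r_i=\|r_i\|_2^2=\nu_i^2$, which uses $r_i\perp P_{-i}u_i$. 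Hence $r_i^\top v=\widehat\alpha_i(v)\,\nu_i^2$. Dividing by $\nu_i^2>0$ gives~\eqref{eq:app_attribution_formula}.

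For the sensitivity bound~\eqref{eq:app_attribution_sensitivity}, the formula shows that $\widehat\alpha_i$ is linear in $v$. Because $\|u_i\|_2=1$,
\[
\|q_i(v+h)-q_i(v)\|_2=|\widehat\alpha_i(h)|=\frac{|r_i^\top h|}{\nu_i^2}.
\]
By Cauchy--Schwarz this is at most $\|r_i\|_2\|h\|_2/\nu_i^2=\|h\|_2/\nu_i$. For $h=t r_i$ with $t\ne0$ it equals $|t|\nu_i^2/\nu_i^2=|t|$, while $\|h\|_2=|t|\nu_i$. The ratio is therefore exactly $1/\nu_i$, so the supremum is attained and equals $1/\nu_i$.

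The main obstacle is the attribution formula, specifically justifying $r_i^\top e=0$ and $r_i^\top u_i=\nu_i^2$. Both come from the observation that $r_i$ lies in the column space of $U$ and is orthogonal to $W_{-i}$. The remaining steps are routine. A useful byproduct is $(G^{-1})_{ii}=1/\nu_i^2$: applying the formula to $v=u_j$ gives $\widehat\alpha(u_j)=e_j$, so the row $r_i^\top/\nu_i^2$ is the $i$th row of $U^{+}=G^{-1}U^\top$, and its squared norm is $1/\nu_i^2$.
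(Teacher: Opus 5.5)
Your proof is correct. The sensitivity half (linearity of $\widehat\alpha_i$, Cauchy--Schwarz, equality at $h=t r_i$) matches the paper's argument.

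Where you depart from the paper is the attribution formula. The paper fixes the $i$th coefficient at $a$ and profiles out the other coefficients by projecting onto $W_{-i}^\perp$. This gives the one-dimensional quadratic $\|(I-P_{-i})v\|_2^2-2a\,r_i^\top v+a^2\nu_i^2$. The paper then observes that, because the inner minimum is attained for every $a$, the minimizer of this profiled objective is the $i$th coordinate of the joint minimizer. This is a Frisch--Waugh--Lovell-style partialling-out argument.

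You instead test the normal equations against $r_i$. Two facts drive your argument: $r_i\in\operatorname{range}(U)$, and $r_i^\top u_j=\nu_i^2\delta_{ij}$. Together they show that $r_i/\nu_i^2$ is the dual (biorthogonal) vector to $u_i$, i.e.\ the $i$th row of $U^{+}$.

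What each route buys:
\begin{itemize}
\item The paper's route shows that $\nu_i^2$ is the curvature of the profiled loss in the $i$th coefficient. That is precisely the ``flatness'' notion behind identifiability. It does, however, require the extra remark that profiling agrees with joint minimization.
\item Your route is shorter and purely algebraic. As you note, it gives Corollary~\ref{cor:app_gram} almost for free through $U^{+}(U^{+})^\top=G^{-1}$. The paper instead proves $(G^{-1})_{ii}=1/\nu_i^2$ separately with a block-inverse (Schur complement) computation.
\end{itemize}
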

\begin{proof}
Full column rank implies $u_i\notin W_{-i}$, so
Proposition~\ref{prop:app_projection} gives $\nu_i>0$.
Fix the $i$th coefficient at $a\in\mathbb R$ and minimize over the others.
Orthogonal projection gives
\begin{align}
    \min_{c\in\mathbb R^{m-1}}
    \|v-a u_i-U_{-i}c\|_2^2
    &=\|(I-P_{-i})(v-a u_i)\|_2^2 \notag\\
    &=\|(I-P_{-i})v-a r_i\|_2^2.
    \label{eq:app_partial_minimization}
\end{align}
Since $r_i\perp W_{-i}$, the final expression equals
\[
    \|(I-P_{-i})v\|_2^2-2a r_i^\top v+a^2\nu_i^2.
\]
This quadratic is strictly convex in $a$, and its unique minimizer is
$a=r_i^\top v/\nu_i^2$.
The inner minimization is attained for every $a$, so this is the $i$th
coordinate of the joint minimizer, proving
\eqref{eq:app_attribution_formula}.

Consequently,
\[
    q_i(v+h)-q_i(v)=\frac{r_i^\top h}{\nu_i^2}u_i.
\]
Using $\|u_i\|_2=1$ and the Cauchy--Schwarz inequality,
\[
    \|q_i(v+h)-q_i(v)\|_2
    =\frac{|r_i^\top h|}{\nu_i^2}
    \le\frac{\|r_i\|_2\|h\|_2}{\nu_i^2}
    =\frac{\|h\|_2}{\nu_i}.
\]
Equality holds for every nonzero scalar multiple of $r_i$.
Since $r_i=u_i-P_{-i}u_i\in\operatorname{range}(U)$, equality is also
attainable by changing only the coefficients of the fixed terms.
\end{proof}

\begin{sncorollary}[Normalized Gram-matrix characterization]
\label{cor:app_gram}
Under the same full-column-rank assumption, for every term $i$,
\begin{equation}
    \nu_i^2
    =1-\|P_{-i}u_i\|_2^2
    =\frac1{(G^{-1})_{ii}},
    \qquad
    \kappa_i^2=(G^{-1})_{ii}.
    \label{eq:app_gram_identity}
\end{equation}
\end{sncorollary}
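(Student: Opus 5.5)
The first equality, $\nu_i^2 = 1-\|P_{-i}u_i\|_2^2$, is already the content of Proposition~\ref{prop:app_projection}. The last claim, $\kappa_i^2=(G^{-1})_{ii}$, follows from Theorem~\ref{thm:app_identifiability}, which gives $\kappa_i=1/\nu_i$, once the middle identity $\nu_i^2=1/(G^{-1})_{ii}$ is established. So the whole task reduces to proving that middle identity under the full-column-rank assumption.

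\textbf{Route via the least-squares solution.} Since $G$ is nonsingular, the unique minimizer in \eqref{eq:app_attribution_ls} is $\widehat\alpha(v)=G^{-1}U^\top v$. Its $i$th coordinate is therefore $e_i^\top G^{-1}U^\top v$. Comparing with \eqref{eq:app_attribution_formula}, we get for every $v\in\mathbb R^p$
\[
  U G^{-1}e_i = r_i/\nu_i^2 .
\]
Take the inner product of both sides with $u_i = Ue_i$:
\begin{itemize}
\item the left side gives $e_i^\top U^\top U G^{-1}e_i = e_i^\top G G^{-1} e_i = 1$;
\item the right side gives $u_i^\top r_i/\nu_i^2$.
\end{itemize}
Since $r_i$ is the orthogonal residual of $u_i$, we have $u_i^\top r_i = \|r_i\|_2^2 = \nu_i^2$ by Proposition~\ref{prop:app_projection}. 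This equality alone is $1=1$ and gives no information, so I would instead take the inner product with $UG^{-1}e_i$ itself:
\[
  \|UG^{-1}e_i\|_2^2 = e_i^\top G^{-1}GG^{-1}e_i = (G^{-1})_{ii},
\]
while the right side has squared norm $\|r_i\|_2^2/\nu_i^4 = 1/\nu_i^2$. This gives $(G^{-1})_{ii}=1/\nu_i^2$ directly.

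\textbf{Alternative route.} One can also argue by block inversion. Permute $i$ to the last position and write
\[
  G=\begin{bmatrix} G_{-i} & U_{-i}^\top u_i\\ u_i^\top U_{-i} & 1\end{bmatrix}.
\]
The Schur complement formula gives $(G^{-1})_{ii} = \bigl(1-u_i^\top U_{-i}G_{-i}^{-1}U_{-i}^\top u_i\bigr)^{-1}$. Here $U_{-i}G_{-i}^{-1}U_{-i}^\top = P_{-i}$, because $U_{-i}$ has full column rank, and $u_i^\top P_{-i}u_i=\|P_{-i}u_i\|_2^2$. This is a useful cross-check.

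\textbf{Main obstacle.} The only delicate step is justifying that the vector identity $UG^{-1}e_i=r_i/\nu_i^2$ holds. It holds because two linear functionals of $v$ that agree for all $v$ have equal representing vectors. This is routine, so the proof is short, and I would present the least-squares route with the Schur complement as a remark.
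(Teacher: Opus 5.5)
Your argument is correct, but your main route is different from the paper's. Your ``alternative route'' is essentially the paper's proof. The paper moves term $i$ to the last position and writes $G$ in block form with $A=U_{-i}^\top U_{-i}$ and $g=U_{-i}^\top u_i$. It uses $P_{-i}=U_{-i}A^{-1}U_{-i}^\top$ to get $\nu_i^2=1-g^\top A^{-1}g>0$. It then solves the block system $Gx=e_i$ by elimination to read off $(G^{-1})_{ii}=1/(1-g^\top A^{-1}g)$, treating $m=1$ as a separate case. Theorem~\ref{thm:app_identifiability} is used there only for the $\kappa_i$ claim.

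Your least-squares route instead reuses the attribution formula $\widehat\alpha_i(v)=r_i^\top v/\nu_i^2$ from that theorem. Comparing it with $\widehat\alpha(v)=G^{-1}U^\top v$ shows that the $i$th row of $U^{+}=G^{-1}U^\top$ is $r_i^\top/\nu_i^2$. After that, $(G^{-1})_{ii}=\|UG^{-1}e_i\|_2^2=\|r_i\|_2^2/\nu_i^4=1/\nu_i^2$ takes one line. This is valid: the theorem is proved earlier and without using the corollary, so there is no circularity.

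Your route has two small advantages. It covers $m=1$ with no separate case. It also makes the geometric meaning explicit: the $i$th dual basis vector of the normalized signatures is the rescaled residual $r_i/\nu_i^2$. The paper's block-inversion argument is self-contained matrix algebra that does not depend on the attribution formula. It therefore serves as an independent cross-check of that formula rather than a consequence of it.

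When you write it up, drop the inner product with $u_i$, which you already note gives only $1=1$. Go straight to the squared-norm comparison.
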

\begin{proof}
For $m=1$, $G=[1]$ and $\nu_i=1$, so the result is immediate.
For $m\ge2$, reorder the columns so that term $i$ is last and write
\[
    G=\begin{bmatrix}A&g\\g^\top&1\end{bmatrix},
    \qquad A=U_{-i}^\top U_{-i},\qquad g=U_{-i}^\top u_i.
\]
Full column rank implies that $A$ is positive definite and
$u_i\notin W_{-i}$. Hence
$P_{-i}=U_{-i}A^{-1}U_{-i}^\top$, and
\[
    \nu_i^2=1-u_i^\top P_{-i}u_i=1-g^\top A^{-1}g>0.
\]
To obtain the corresponding diagonal entry of $G^{-1}$, solve
\[
    \begin{bmatrix}A&g\\g^\top&1\end{bmatrix}
    \begin{bmatrix}w\\t\end{bmatrix}
    =\begin{bmatrix}0\\1\end{bmatrix}.
\]
The first block equation yields $w=-A^{-1}gt$, and the second gives
$(1-g^\top A^{-1}g)t=1$.
Thus $(G^{-1})_{ii}=t=1/\nu_i^2$.
Combining this identity with Proposition~\ref{prop:app_projection}
and Theorem~\ref{thm:app_identifiability} proves the claim.
\end{proof}

Because $u_i$ has unit norm,
$\|q_i(v+h)-q_i(v)\|_2
=|\widehat\alpha_i(v+h)-\widehat\alpha_i(v)|$.
Thus the coefficient-sensitivity characterization is exactly an
attribution-sensitivity characterization in the normalized signature
representation.

\subsection{Derivation of the Theory-Calibrated Threshold}
\label{app:threshold_derivation}

Under the full-column-rank assumption of
Appendix~\ref{app:proof_identifiability},
Theorem~\ref{thm:app_identifiability} gives $\kappa_i^2=1/\nu_i^2$.
If $u_i$ is orthogonal to the remaining term span, then $\nu_i=1$ and
$\kappa_i=1$. Hence the squared attribution-sensitivity inflation
relative to this orthogonal case is
\begin{equation}
    \mathcal A_i=\kappa_i^2=\frac1{\nu_i^2}.
    \label{eq:app_inflation}
\end{equation}
We adopt the criterion stated in the main text: the squared attribution
sensitivity of a retained term should remain within one order of
magnitude of the orthogonal case, namely $\mathcal A_i\le10$.

\begin{sncorollary}[Theory-calibrated threshold]
\label{cor:app_threshold}
Under the preceding assumptions, the criterion $\mathcal A_i\le10$
is equivalent to
\begin{equation}
    \nu_i\ge\tau_{\mathrm{id}},
    \qquad
    \boxed{\tau_{\mathrm{id}}=\frac1{\sqrt{10}}.}
    \label{eq:app_threshold}
\end{equation}
It is also equivalent to retaining at least $10\%$ of the term's
signature energy outside the span of the remaining terms.
\end{sncorollary}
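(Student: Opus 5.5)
The argument is a direct chain of equivalences, using Theorem~\ref{thm:app_identifiability} and Proposition~\ref{prop:app_projection}. Under the full-column-rank assumption, Theorem~\ref{thm:app_identifiability} gives $\nu_i>0$ and $\kappa_i=1/\nu_i$. Hence $\mathcal A_i=\kappa_i^2=1/\nu_i^2$ is well defined and finite, as recorded in \eqref{eq:app_inflation}.

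The first step handles the sensitivity criterion. Since $\nu_i>0$, the inequality $\mathcal A_i\le 10$, i.e.\ $1/\nu_i^2\le 10$, can be multiplied through by the positive quantity $\nu_i^2/10$. This gives $\nu_i^2\ge 1/10$. Taking square roots, which is monotone on the nonnegative reals and legitimate because $\nu_i\ge0$ by Proposition~\ref{prop:app_projection}, yields $\nu_i\ge 1/\sqrt{10}=\tau_{\mathrm{id}}$. Each step is reversible, so the equivalence holds in both directions. Equivalently, one may invoke Corollary~\ref{cor:app_gram} and phrase the criterion as $(G^{-1})_{ii}\le 10$.

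The second step handles the energy statement. By Proposition~\ref{prop:app_projection}, $\nu_i^2=\|r_i\|_2^2=1-\|P_{-i}u_i\|_2^2$. This quantity is exactly the fraction of the normalized signature energy $\|u_i\|_2^2=1$ that lies in the orthogonal complement of $W_{-i}$. Because the normalization by $\ell_i$ is a positive scaling, the same fraction holds for the unnormalized signature: $\|(I-P_{-i})\psi_i\|_2^2/\|\psi_i\|_2^2=\nu_i^2$. So $\nu_i\ge\tau_{\mathrm{id}}$ holds if and only if at least $10\%$ of the term's signature energy lies outside the span of the remaining terms.

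There is no real obstacle here. The only point needing care is justifying the division and the square-root steps, which rely on $\nu_i>0$ (guaranteed by full column rank) and $\nu_i\ge0$ respectively. It is also worth noting that the energy characterization holds even without full rank, since Proposition~\ref{prop:app_projection} does not require it.
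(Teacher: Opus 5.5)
Your proposal is correct and matches the paper's proof: the chain $\mathcal A_i\le 10 \iff 1/\nu_i^2\le 10 \iff \nu_i^2\ge 1/10 \iff \nu_i\ge 1/\sqrt{10}$ uses $\nu_i>0$ from full column rank, and Proposition~\ref{prop:app_projection} then identifies $\nu_i^2$ as the fraction of signature energy outside the remaining span. Your two extra remarks are accurate refinements the paper leaves implicit: the energy fraction is unchanged by the scaling $\ell_i$, and the energy equivalence needs no rank assumption.
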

\begin{proof}
Since $\nu_i>0$, equation~\eqref{eq:app_inflation} gives
\begin{equation}
    \mathcal A_i\le10
    \quad\Longleftrightarrow\quad
    \frac1{\nu_i^2}\le10
    \quad\Longleftrightarrow\quad
    \nu_i^2\ge\frac1{10}
    \quad\Longleftrightarrow\quad
    \nu_i\ge\frac1{\sqrt{10}}.
    \label{eq:app_threshold_equivalence}
\end{equation}
By Proposition~\ref{prop:app_projection}, $\nu_i^2$ is precisely the
fraction of signature energy outside the remaining term span.
This proves both assertions.
\end{proof}

At the threshold boundary, the residual contains exactly $10\%$ of
the term's full signature energy, whereas its norm is
$1/\sqrt{10}$ of the full signature norm.
The worst-case attribution sensitivity is $\sqrt{10}$ times the
orthogonal value, and its square is one order of magnitude larger.

The main text uses the strict qualification rule
$\nu_i>\tau_{\mathrm{id}}$, equivalently $\mathcal A_i<10$ and
$\nu_i^2>0.1$. This excludes equality at the boundary and implies the
non-strict stability requirement above.
The threshold follows from the stated one-order-of-magnitude criterion,
not from algorithm- or dataset-specific threshold optimization.

\subsection{Optimal Deletion and Coefficient Refitting}
\label{app:deletion_refit}

Let $\mathcal S_{-i}=\operatorname{span}\{\phi_j:j\ne i\}$,
as in the main text.
The following refitting problem approximates the original candidate
$s$ in the fixed empirical Sobolev geometry. It allows arbitrary real
outer coefficients on the remaining fixed terms; it is not a refit
of their internal parameters or a minimization of target-label error.

\begin{sntheorem}[Exact deletion-and-refit cost]
\label{thm:app_deletion}
Without requiring $U$ to have full column rank,
\begin{equation}
    \min_{\widetilde s\in\mathcal S_{-i}}
    \|s-\widetilde s\|_{S,X}
    =|b_i|\,\ell_i\nu_i
    =|b_i|\,\|\phi_i\|_{S,X}\nu_i.
    \label{eq:app_deletion}
\end{equation}
The minimum is attained. If $b_i\ne0$, then
\begin{equation}
    \frac{\displaystyle\min_{\widetilde s\in\mathcal S_{-i}}
          \|s-\widetilde s\|_{S,X}^2}
         {b_i^2\|\phi_i\|_{S,X}^2}
    =\nu_i^2.
    \label{eq:app_relative_deletion}
\end{equation}
\end{sntheorem}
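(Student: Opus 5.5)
The plan is to move the whole problem into signature space using linearity of $\Psi_X$ (Lemma~\ref{lem:app_linearity}), and then apply the projection computation from Proposition~\ref{prop:app_projection}.

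\emph{Step 1: rewrite the objective.} Any $\widetilde s\in\mathcal S_{-i}$ has the form $\widetilde s=\sum_{j\ne i}c_j\phi_j$ with $c\in\mathbb R^{m-1}$. By linearity,
\[
\|s-\widetilde s\|_{S,X}=\Big\|b_i\psi_i+\sum_{j\ne i}(b_j-c_j)\psi_j\Big\|_2 .
\]
The map $c\mapsto (b_j-c_j)_{j\ne i}$ is a bijection of $\mathbb R^{m-1}$. So minimizing over $\widetilde s$ is the same as minimizing $\|b_i\psi_i-\Psi_{-i}c'\|_2$ over all $c'\in\mathbb R^{m-1}$. The representation of $\widetilde s$ by $c$ need not be unique, but that is harmless: the objective depends only on the image vector, and every $c'$ is realized by some $\widetilde s$.

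\emph{Step 2: evaluate the minimum.} Use the orthogonal decomposition from the proof of Proposition~\ref{prop:app_projection}. For every $c'$,
\[
\|b_i\psi_i-\Psi_{-i}c'\|_2^2=\|(I-P_{-i})b_i\psi_i\|_2^2+\|P_{-i}b_i\psi_i-\Psi_{-i}c'\|_2^2 .
\]
The second term vanishes when $c'=\Psi_{-i}^{+}b_i\psi_i$, since $P_{-i}=\Psi_{-i}\Psi_{-i}^{+}$. Hence the minimum is attained, and it equals
\[
|b_i|\,\|(I-P_{-i})\psi_i\|_2=|b_i|\,\ell_i\|r_i\|_2=|b_i|\,\ell_i\nu_i,
\]
using $\nu_i=\|r_i\|_2$ from Proposition~\ref{prop:app_projection}. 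This needs no full-rank assumption, because the pseudoinverse construction works in general. Finally, $\ell_i=\|\psi_i\|_2=\|\phi_i\|_{S,X}$ by definition, which gives \eqref{eq:app_deletion}.

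\emph{Step 3: the ratio.} Squaring \eqref{eq:app_deletion} and dividing by $b_i^2\|\phi_i\|_{S,X}^2$ gives \eqref{eq:app_relative_deletion}. This division is legitimate when $b_i\ne0$, together with the standing assumption $\ell_i>0$.

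The only step needing real care is Step 1. It requires checking that $\sigma_s$ is held fixed (it is, by the convention after \eqref{eq:app_scale}), so that $\Psi_X$ is genuinely linear on the differences $s-\widetilde s$. It also requires checking that reparametrizing by $c'$ loses no generality. Everything else is a direct reuse of the earlier projection argument.
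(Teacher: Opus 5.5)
Your proposal is correct and follows essentially the same route as the paper. Both use linearity of $\Psi_X$ to identify the feasible set with $W_{-i}=\operatorname{range}(\Psi_{-i})$; your explicit reparametrization $c\mapsto c'$ plays the role of the paper's observation that $\Psi_X(s_{-i})\in W_{-i}$. Both then read off the attained minimum $|b_i|\,\|(I-P_{-i})\psi_i\|_2=|b_i|\ell_i\nu_i$ from Proposition~\ref{prop:app_projection} and obtain the ratio identity by squaring and dividing.
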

\begin{proof}
Write $s=s_{-i}+b_i\phi_i$, where $s_{-i}\in\mathcal S_{-i}$.
By linearity, $\Psi_X(\mathcal S_{-i})=W_{-i}$ and
$\Psi_X(s_{-i})\in W_{-i}$.
Consequently,
\begin{align*}
    \min_{\widetilde s\in\mathcal S_{-i}}
    \|s-\widetilde s\|_{S,X}
    &=\min_{w\in W_{-i}}\|\Psi_X(s)-w\|_2\\
    &=\|(I-P_{-i})\Psi_X(s)\|_2\\
    &=|b_i|\,\|(I-P_{-i})\psi_i\|_2
     =|b_i|\,\ell_i\nu_i.
\end{align*}
The projection $P_{-i}\Psi_X(s)$ belongs to
$\Psi_X(\mathcal S_{-i})$, so at least one $\widetilde s$
attains this distance.
For $b_i\ne0$, squaring and dividing by $b_i^2\ell_i^2>0$ gives
\eqref{eq:app_relative_deletion}.
\end{proof}

Thus novelty measures relative reconstructability, while the absolute
Sobolev deletion cost also depends on $|b_i|\|\phi_i\|_{S,X}$.

\section{Related Work}
\label{app:related_work}

\paragraph{Heuristic search-based SR methods.}
Genetic programming~\citep{burlacu2020operon,cranmer2023interpretable} and Monte Carlo tree search~\citep{sun2022symbolic} are widely used heuristic approaches to SR.
More recent methods employ pretrained neural networks~\citep{shojaee2023transformer,kamienny2023deep}, deep reinforcement learning policies trained online~\citep{petersen2019deep,xu2023rsrm}, and large language models~\citep{shojaee2025llm} to guide the generation and exploration of candidate expressions.
Other methods use neural networks to identify symmetries and separable structures in the data, thereby narrowing the search space~\citep{udrescu2020ai,udrescu2020ai_2}.
However, advances in search mechanisms do not by themselves ensure that the terms within a candidate exhibit distinguishable behavior.
We use Sobolev Novelty to provide complementary term-level structural feedback, guiding the search toward terms that contribute new behavior.

\paragraph{Generative SR methods.}
Generative SR methods pretrain Transformers on large-scale synthetic formula--data pairs to learn the correspondence between numerical observations and symbolic expressions~\citep{biggio2021neural,kamienny2022end}.
These models can generate formulas directly or be integrated with symbolic search to guide candidate generation and search planning~\citep{shojaee2023transformer,kamienny2023deep}.
However, random formula generation does not ensure that the constituent terms exhibit independent functional behavior.
We therefore use Sobolev Novelty to filter out pretraining formulas containing low-novelty terms, reducing term-level behavioral redundancy in the training data.

\paragraph{Evaluation metrics and constraints in SR.}
Benchmarks such as SRBench primarily evaluate SR solutions in terms of predictive accuracy and expression complexity~\citep{la2021contemporary}.
Metrics such as $R^2$ and RMSE assess overall predictive performance, while expression length and depth characterize overall size, but these measures do not directly capture behavioral relationships among terms.
Existing work also introduces structural evaluation and term-level feedback.
EIC assesses computational structural stability through the propagation and amplification of rounding errors~\citep{yu2025beyond}.
IGSR measures predictive influence using the change in validation MSE, $\Delta\mathrm{MSE}$, caused by removing an individual term~\citep{saveliev2026influence}.
PiT-PO~\citep{wang2026llm} identifies potentially redundant terms from the relative magnitudes of fitted coefficients and applies token-level penalties.
However, these criteria do not directly quantify the independent structural information each term contributes relative to the remaining terms.
Sobolev Novelty addresses this gap by quantifying how much of each term's behavior cannot be jointly reconstructed by the remaining terms in an empirical Sobolev space, complementing expression-level evaluation.

\section{Datasets and Evaluation Protocols}
\label{app:datasets}

\subsection{White-Box Tasks}
\label{app:white_box_tasks}

Our SRBench white-box evaluation set comprises 133 scientific equation tasks, including 119 Feynman tasks and 14 Strogatz tasks, each with an analytic ground-truth expression. This set is used for the structural analysis in Section~4.1, the evaluation of GP, MCTS, and their Sobolev-guided variants in Section~4.2, and the testing of pretrained models in Section~4.3.

\subsection{Black-Box Tasks}
\label{app:black_box_tasks}

The black-box evaluation set comprises 122 regression tasks for which no reference analytic expression is provided. This set is used to evaluate GP, MCTS, IGSR, and their Sobolev-guided variants. IGSR and SN-IGSR are evaluated only on the black-box tasks.

\subsection{Pretraining Corpora}
\label{app:pretraining_corpora}

Section~4.3 compares three formula corpora for E2ESR pretraining. Random uses randomly generated formulas. SN-filtered applies Sobolev Novelty filtering to randomly generated formulas: single-term formulas are accepted directly, whereas multi-term formulas are retained only when $\min_i \nu_i > \tau_{\mathrm{id}}$. PhyE2E formulas are taken from the pretraining corpus introduced by \citet{ying2025neural}. All three settings use the same E2ESR architecture, tokenizer, training objective, optimizer, and inference pipeline. We conduct three independent runs per setting and evaluate checkpoints from 1M to 35M accepted samples using mean $R^2$ across the 133 white-box tasks.

\subsection{Periodic-Hill Flow}
\label{app:periodic_hill_flow}

Section~4.4 considers turbulence modeling for periodic-hill flow. The constitutive relations discovered by MCTS, DSRRANS, and their Sobolev-guided variants are embedded in a RANS solver for a posteriori evaluation. Conventional RANS serves as the baseline, and DNS provides the reference. Evaluation considers the reattachment position $x/H$, the maximum recirculation-zone velocity magnitude (in $\mathrm{m/s}$), and the streamwise velocity field. For the first two quantities, prediction errors are computed as absolute deviations from DNS.

\subsection{Evaluation Metrics and Reporting Conventions}
\label{app:evaluation_metrics}

\paragraph{Predictive accuracy and expression complexity.}
Predictive accuracy is measured by the coefficient of determination $R^2$ on the test set. Expression complexity $C$ is defined as the total number of nodes in the parsed expression tree, including operator, variable, and constant nodes. In Figure~2, accuracy ranks are computed from mean $R^2$, and complexity ranks from mean expression complexity. Higher mean $R^2$ and lower mean complexity correspond to better accuracy and complexity ranks, respectively.

\paragraph{Structural evaluation.}
We report the mean term Sobolev Novelty and the proportion of terms satisfying $\nu_i > \tau_{\mathrm{id}}$, where $\tau_{\mathrm{id}} = 1/\sqrt{10}$. The structural analysis in Section~4.1 retains only discovered expressions with test $R^2 \geq 0.8$. Within each task, Sobolev Novelty is computed for the ground-truth equation and all retained expressions using the same input samples. Ground-truth expressions serve solely as post-hoc structural references and are not used to guide or supervise the methods.

\section{Method Integration and Implementation Details}
\label{app:implementation}

\subsection{Integration with Genetic Programming}
\label{app:gp_integration}

\paragraph{Original workflow.}
GP represents candidate expressions as trees and performs population-based search over successive generations. In each generation, the coefficients of candidate expressions are fitted, and parents are selected according to an accuracy--complexity score. Crossover and mutation produce new expressions, which are fitted and evaluated before the population is updated. Elitism retains high-scoring candidates for subsequent generations.

\paragraph{Integration of Sobolev Novelty.}
We augment this workflow with structural screening and reuse. After fitting and evaluating the candidates in each generation, we extract symbolic terms from distinct high-scoring expressions, remove duplicates, and store the terms in a structural archive maintained across generations. Within the archive capacity limit, we prioritize terms according to their Sobolev projection residuals relative to the span of the retained term representations. This favors structures with complementary function-value and gradient behavior.

Starting from the current partial expression, we construct additional candidates using the retained terms. For each new term, we assess its ability to explain the fitting residual and its structural complementarity to existing terms. These assessments yield fit-guided and Sobolev-guided candidate sets, which we merge and deduplicate. Candidate construction also includes bounded structural proposals involving products, rational expressions, trigonometric combinations, and radicals.

We fit and score all additional candidates using the original numerical fitting routine. The final expression is selected from the stored complete candidates according to the original GP accuracy--complexity score, without an additional novelty penalty. Sobolev Novelty enhances GP search by guiding the retention and recombination of candidate structures.

\subsection{Integration with Monte Carlo Tree Search}
\label{app:mcts_integration}

\paragraph{Original workflow.}
MCTS searches for expressions through node selection, expansion, simulation, and reward backpropagation. Node selection determines which branch to explore, expansion generates successor structures, and simulation produces complete candidates for numerical evaluation. After coefficient fitting, candidates are evaluated using an accuracy--complexity score. The resulting rewards are backpropagated through the search tree to inform subsequent node selection.

\paragraph{Integration of Sobolev Novelty.}
We introduce structural screening into candidate evaluation in each round of expansion and simulation. Complete candidates obtained from the current expansion state and subsequent simulations are collected into a common pool. We first retain at most 64 candidates according to the original accuracy--complexity score $R_{\mathrm{base}}$, then rerank them using the Sobolev Novelty of their constituent terms. The structurally guided score is defined as
\begin{equation}
\label{eq:app_mcts_sn_score}
R_{\mathrm{SN}}(s)
=
R_{\mathrm{base}}(s)
-
\frac{0.01}{m}
\sum_{i=1}^{m}
\left[
\max\left(0,1-\frac{\nu_i}{\tau_{\mathrm{id}}}\right)
\right]^2,
\end{equation}
where $m$ is the number of terms in the candidate. The original score identifies candidates with favorable accuracy--complexity trade-offs, and the additional penalty lowers the priority of candidates containing low-novelty terms.

For the top-ranked candidate, we attempt at most one term deletion followed by coefficient refitting. Among the terms satisfying $\nu_i < \tau_{\mathrm{id}}$, we remove the term with the smallest independent contribution $|b_i|\nu_i\|\Psi_X(\phi_i)\|_2$ and refit the remaining coefficients. We accept the modification only if the original accuracy--complexity score does not decrease; otherwise, we retain the original expression.

Finally, we rerank the candidate pool and backpropagate the score $R_{\mathrm{SN}}$ of the selected candidate through the search tree. The original node-selection rule remains unchanged, and Sobolev Novelty influences subsequent exploration through candidate evaluation and reward backpropagation.

\subsection{Integration with IGSR}
\label{app:igsr_integration}

\paragraph{Original workflow.}
IGSR combines LLM-based symbolic term generation with MCTS. Tree search selects a candidate for expansion, and the LLM proposes new symbolic terms that are combined with existing terms to form an additive model. After numerical coefficient fitting, the original term-wise influence mechanism screens terms based on the change in validation error caused by removing each term. The resulting candidate is refitted, and its validation score guides subsequent node selection and expansion.

\paragraph{Integration of Sobolev Novelty.}
We perform structural diagnosis before the LLM proposes new terms. We compute the term-wise Sobolev Novelty of the parent candidate on a fixed set of training samples used during search. Terms satisfying $\nu_i \leq \tau_{\mathrm{id}}$ are flagged as low-novelty terms. The resulting feedback lists at most three low-novelty terms and two high-novelty reference terms and is appended to the prompt for the next round of term generation.

The feedback identifies terms whose function values and local variation can be jointly approximated by the remaining terms and guides the LLM to propose structures with complementary variation patterns. When the parent candidate is empty, diagnosis fails, or no low-novelty terms are identified, we retain the original prompt without adding this feedback.

After term generation, the original IGSR workflow continues with influence-based screening, coefficient fitting, validation evaluation, and tree-search updates. To keep modifications to IGSR minimal, we introduce Sobolev Novelty only through term-generation feedback. The validation objective and tree-search scoring remain unchanged, and no additional term deletion or sibling-candidate reranking is performed. Both settings use the same LLM and logical token budget, with the additional feedback text included in this budget.

\subsection{Integration with E2ESR}
\label{app:e2esr_integration}

\paragraph{Original workflow.}
E2ESR is pretrained on large-scale synthetic formula--data pairs to learn a mapping from numerical observations to symbolic expressions. Pretraining comprises random formula generation, numerical sample construction, sequence encoding, and model updates. The generator constructs expressions online using variables, constants, and operators. For each expression, it samples input points and computes the corresponding outputs, pairing these observations with the symbolic sequence to form a training example. At inference time, the model generates candidate expressions from the given observations. These candidates undergo numerical constant optimization and scoring to select the final output.

\paragraph{Integration of Sobolev Novelty.}
We introduce formula-level rejection sampling between numerical sample generation and model training. For each randomly generated expression, we extract the top-level additive terms by distributing multiplication over addition. Powers and expressions within functions are left unexpanded, and no trigonometric identities are applied.

We evaluate the function values and first-order input gradients of each term at the points sampled for the corresponding formula, using all available points up to a maximum of 200. Gradients are computed analytically using the chain rule on the expression tree and converted to derivatives with respect to standardized input coordinates. Structural evaluation is restricted to a common subset of points at which all terms have finite function values and gradients. All term representations share a candidate-level output scale and use weights $\lambda_0 = \lambda_1 = 1$. Formulas are rejected if fewer than 32 valid samples remain or valid representations cannot be constructed.

Following the definition in Section~3, we compute the leave-one-term-out Sobolev Novelty of each term relative to the remaining terms. Among formulas that pass the numerical validity checks, single-term formulas are accepted with $\nu_1 = 1$, and multi-term formulas are accepted only if $\min_i \nu_i > \tau_{\mathrm{id}}$, where $\tau_{\mathrm{id}} = 1/\sqrt{10}$. If any term fails to exceed the threshold, the entire formula is rejected without term deletion or refitting, and the generator proceeds to the next candidate.

Accepted samples enter the original pipeline for encoding, batch construction, and model updates. The E2ESR architecture, tokenizer, training objective, optimizer, and inference-time procedures for constant optimization and candidate ranking remain unchanged. No additional Sobolev loss or inference-time reranking is introduced. The training budget counts only accepted samples used for model updates. Sobolev Novelty improves the final performance of E2ESR through pretraining data selection alone.

\subsection{Turbulence Closure Modeling for Periodic-Hill Flow}
\label{app:turbulence_details}

\subsubsection{Background of the Turbulence Modeling Task}
\label{app:turbulence_background}

\paragraph{Physical background.}
The Reynolds-averaged Navier--Stokes (RANS) approach predicts mean flow quantities, such as velocity and pressure, by solving averaged flow equations. Reynolds decomposition of the instantaneous velocity and averaging of the momentum equations introduce the Reynolds stress tensor
\begin{equation}
\label{eq:app_reynolds_stress}
\tau_{ij} = \overline{u_i'u_j'},
\end{equation}
where $u_i'$ denotes the velocity fluctuation in the $i$th direction and the overbar denotes statistical averaging. Reynolds stresses represent the contribution of turbulent fluctuations to mean momentum transport. An additional constitutive relation is required to determine these stresses and close the mean flow equations. Direct numerical simulation (DNS) provides high-fidelity reference statistics for learning and evaluating closure relations.

Conventional linear eddy-viscosity models express the anisotropic part of the Reynolds stress as proportional to the mean strain rate:
\begin{equation}
\label{eq:app_linear_eddy_viscosity}
\boldsymbol{\tau}
=
\frac{2}{3}k\mathbf I
-
\nu_t
\left[
\nabla\overline{\mathbf u}
+
\left(\nabla\overline{\mathbf u}\right)^{\!\top}
\right],
\end{equation}
where $k$ is the turbulent kinetic energy, $\nu_t$ is the eddy viscosity, and $\mathbf I$ is the identity tensor. This restricted stress--strain relationship can lead to substantial errors in flows with pronounced separation, streamline curvature, and stress anisotropy. DSRRANS retains the linear eddy-viscosity contribution and adds a nonlinear stress correction represented by explicit symbolic expressions.

\paragraph{Periodic-hill flow and case configuration.}
Periodic-hill flow develops in a channel with a flat upper wall and a periodically undulating lower wall. Downstream of a hill crest, an adverse pressure gradient decelerates the near-wall flow, producing a separated shear layer and a recirculation region. The flow subsequently reattaches downstream. The extent of the recirculation region, the reattachment position, and the velocity distribution reflect how well a closure model represents stress and momentum transport in separated flows. DSRRANS also uses this flow for turbulence modeling.

For tensor regression, we use the publicly available DSRRANS data with geometry parameter $\alpha = 0.8$ and Reynolds number
\begin{equation}
\label{eq:app_hill_reynolds_number}
Re_H = \frac{U_bH}{\nu} = 5600,
\end{equation}
where $H$ is the hill height, $U_b$ is the volume-averaged velocity, and $\nu$ is the kinematic viscosity. The parameter $\alpha$ scales the hills horizontally, controlling their steepness and the streamwise period length. Periodic boundary conditions are imposed in the streamwise direction, and no-slip conditions are applied at both walls. The configuration has a statistically two-dimensional mean flow, while the Reynolds stress is represented by a three-dimensional tensor that includes the spanwise normal-stress component.

\paragraph{Local flow representation.}
In the publicly available training data, the input features and tensor bases are derived from a standard $k$--$\varepsilon$ RANS solution, where $\varepsilon$ is the turbulent kinetic energy dissipation rate. Following the notation of DSRRANS, we use the timescale $k/\varepsilon$ to define the dimensionless strain-rate and rotation-rate tensors:
\begin{equation}
\label{eq:app_strain_rotation_tensors}
\begin{aligned}
\mathbf S
&=
\frac{k}{2\varepsilon}
\left[
\nabla\overline{\mathbf u}
+
\left(\nabla\overline{\mathbf u}\right)^{\!\top}
\right],
\\
\mathbf R
&=
\frac{k}{2\varepsilon}
\left[
\nabla\overline{\mathbf u}
-
\left(\nabla\overline{\mathbf u}\right)^{\!\top}
\right].
\end{aligned}
\end{equation}
The corresponding scalar invariants are
\begin{equation}
\label{eq:app_flow_invariants}
I_1 = \operatorname{tr}(\mathbf S^2),
\qquad
I_2 = \operatorname{tr}(\mathbf R^2).
\end{equation}
Before symbolic regression, we transform these invariants as
\begin{equation}
\label{eq:app_invariant_transform}
x_1 = \tanh(I_1/2),
\qquad
x_2 = \tanh(I_2/2),
\end{equation}
to compress their numerical ranges. The transformed flow invariants $x_1$ and $x_2$ serve as the regression inputs, and the tensor-basis values are taken directly from the dataset.

\paragraph{Regression target and data composition.}
We follow the dimensionless stress definition used in the DSRRANS data:
\begin{equation}
\label{eq:app_stress_decomposition}
\begin{aligned}
\mathbf b
&=
\frac{\boldsymbol{\tau}}{k}
-
\frac{2}{3}\mathbf I,
\\
\mathbf b^{\parallel}
&=
-2C_\mu\mathbf S,
\\
\mathbf b^{\perp}
&=
\mathbf b-\mathbf b^{\parallel},
\end{aligned}
\end{equation}
where $C_\mu = 0.09$. The regression target $\mathbf b^{\perp}$ is the stress correction relative to the linear eddy-viscosity contribution.

The DSRRANS data preparation pipeline interpolates DNS results onto the RANS mesh to construct the nonlinear stress-correction targets. We use the publicly available arrays of invariants, tensor bases, and correction targets, comprising 9,600 local samples. Each sample contains two regression invariants, three $3\times3$ tensor bases, and one $3\times3$ target tensor. The three coefficient functions are fitted jointly by minimizing the prediction error of the assembled tensor over its $11$, $12$, $22$, and $33$ components.

\paragraph{Closure modeling and flow-field evaluation.}
Stress regression evaluates the fit of the symbolic model to the stress corrections at the given samples. A posteriori flow-field evaluation assesses mean-flow predictions after the learned closure is incorporated into the RANS equations. Section~4.4 reports a posteriori comparisons with DNS for the reattachment position $x/H$, the maximum recirculation-zone velocity magnitude, and the streamwise velocity field. The reattachment position characterizes the downstream extent of the separation region, and the velocity magnitude characterizes recirculation strength. For both scalar quantities, errors are measured as absolute deviations from the DNS reference.

\subsubsection{Closure Search Using the DSRRANS Tensor Representation}
\label{app:dsrrans_integration}

\paragraph{Original method and tensor representation.}
DSRRANS constructs turbulence closures by learning symbolic coefficient functions in a tensor-basis expansion. The original search procedure uses an LSTM to generate candidate expressions for these functions and optimizes their numerical constants. Rewards computed from tensor-prediction errors then guide updates to the generation policy through risk-seeking policy gradients. For the statistically two-dimensional flow considered here, the three tensor bases are
\begin{equation}
\label{eq:app_tensor_basis}
\begin{aligned}
\mathbf T_1 &= \mathbf S,
\\
\mathbf T_2 &= \mathbf S\mathbf R-\mathbf R\mathbf S,
\\
\mathbf T_3
&=
\mathbf S^2
-
\frac{1}{3}\mathbf I\operatorname{tr}(\mathbf S^2).
\end{aligned}
\end{equation}
The nonlinear closure term is expressed as
\begin{equation}
\label{eq:app_tensor_closure}
\widehat{\mathbf b}^{\perp}
=
\sum_{k=1}^{3}G_k(x_1,x_2)\mathbf T_k,
\end{equation}
where $G_k$ denotes a scalar coefficient function to be identified and $x_1,x_2$ are the transformed invariants defined above.

\paragraph{Forward-selection workflow.}
The offline search implementation described here uses this tensor representation with forward selection over a bounded monomial dictionary. The dictionary contains bivariate monomials of total degree at most 12. Each candidate term is multiplied by the tensor basis associated with its coefficient channel to form a tensor contribution. The baseline without Sobolev guidance evaluates each candidate by the reduction in training squared error obtained after adding that candidate and jointly refitting all coefficients. It selects the term with the largest reduction and then jointly refits the coefficients of all selected terms.

\paragraph{Integration of Sobolev Novelty.}
We incorporate Sobolev Novelty into next-term selection after computing candidate gains with the baseline procedure. Among candidates whose error reduction is at least $94\%$ of the largest reduction at the current step, we select the term with the highest conditional Sobolev Novelty. We then jointly refit the coefficients of all selected terms. This rule uses Sobolev Novelty to guide selection among candidates with comparable fitting gains.

Structural evaluation uses the tensor contribution $\phi(x_1,x_2)\mathbf T_k$ of each candidate. The reference space is spanned only by the representations of previously selected terms in the same coefficient channel. The value block records the tensor contribution at the sampled inputs. To construct the gradient block, we differentiate the coefficient function with respect to the input invariants and multiply these derivatives by the tensor basis, which remains fixed during differentiation. Conditional novelty is the normalized projection residual of the candidate representation relative to this reference space and serves as a continuous ranking score.

When further model compression is required, both settings use the same backward term-deletion rule, guided by training error alone, and jointly refit the coefficients after each deletion.

\section{Detailed Experimental Results}
\label{app:detailed_results}

\subsection{Per-Method Term Qualification Statistics}
\label{app:qualification_statistics}

Table~\ref{tab:term_qualification_by_method} reports the complete
per-method term qualification statistics used in Section~\ref{exp1}.
We consider only multi-term expressions, since a single-term expression
has no remaining-term reference space and is assigned a novelty of one
by convention. For discovered expressions, we retain candidates with
test $R^2 \ge 0.8$, following the protocol in the main text. A term is
qualified when
$\nu_i > \tau_{\mathrm{id}} = 1/\sqrt{10}$.

Under this criterion, $87$ of the $94$ ground-truth terms are qualified,
corresponding to a qualification rate of $92.6\%$. Across the 15 SR
methods, the macro-average qualification rate is $38.3\%$, yielding a
difference of $54.3$ percentage points from the ground truth.
Per-method counts and qualification rates are reported in
Table~\ref{tab:term_qualification_by_method}.

\subsection{Sobolev Novelty as Search Guidance}
\label{app:search_results}

Corresponding to Figure~\ref{fig:search_guidance} in the main text,
Tables~\ref{tab:app_whitebox_results} and~\ref{tab:app_blackbox_results}
provide the complete experimental results on the white-box and black-box
tasks, respectively. The tables report the mean $R^2$, mean expression
complexity, and mean Sobolev Novelty of each method, together with the
relative changes between GP, MCTS, and IGSR and their Sobolev-guided
variants.

\begin{table}[t]
    \centering
    \caption{
        Per-method term qualification statistics at the
        theory-calibrated threshold
        $\tau_{\mathrm{id}}=1/\sqrt{10}$.
        A term is qualified when
        $\nu_i>\tau_{\mathrm{id}}$.
    }
    \label{tab:term_qualification_by_method}
    \small
    \setlength{\tabcolsep}{7pt}
    \renewcommand{\arraystretch}{1.06}
    \begin{tabular}{@{}lrrrr@{}}
        \toprule
        Method
        & \shortstack{Multi-term\\Expressions}
        & \shortstack{Total\\Terms}
        & \shortstack{Qualified\\Terms}
        & \shortstack{Qualification\\Rate (\%)} \\
        \midrule

        \textbf{Ground Truth}
        & 41
        & 94
        & 87
        & \textbf{92.6} \\

        \midrule
        AFP
        & 451
        & 1{,}496
        & 1{,}039
        & 69.5 \\

        AFP-FE
        & 563
        & 1{,}814
        & 1{,}295
        & 71.4 \\

        AIFeynman2
        & 388
        & 6{,}130
        & 448
        & 7.3 \\

        BSR
        & 524
        & 4{,}335
        & 1{,}657
        & 38.2 \\

        DSR
        & 339
        & 990
        & 677
        & 68.4 \\

        E2ESR
        & 672
        & 5{,}508
        & 646
        & 11.7 \\

        EPLEX
        & 434
        & 1{,}810
        & 969
        & 53.5 \\

        FEAT
        & 838
        & 12{,}516
        & 1{,}534
        & 12.3 \\

        GP-GOMEA
        & 737
        & 4{,}815
        & 1{,}434
        & 29.8 \\

        GPlearn
        & 459
        & 2{,}871
        & 1{,}253
        & 43.6 \\

        NeurSR
        & 318
        & 1{,}325
        & 428
        & 32.3 \\

        Operon
        & 964
        & 12{,}223
        & 1{,}280
        & 10.5 \\

        PySR
        & 690
        & 2{,}635
        & 1{,}475
        & 56.0 \\

        RSRM
        & 426
        & 1{,}374
        & 844
        & 61.4 \\

        SBP-GP
        & 560
        & 19{,}072
        & 1{,}524
        & 8.0 \\

        \midrule
        \textbf{Mean over 15 SR methods}
        & --
        & --
        & --
        & \textbf{38.3} \\

        \bottomrule
    \end{tabular}

    \vspace{2pt}
    \parbox{0.94\linewidth}{
        \footnotesize
        \textit{Note.}
        The mean over SR methods is the unweighted macro-average of
        the 15 method-level qualification rates, rather than a pooled
        average over all terms.
    }
\end{table}

\begin{table}[t]
    \centering
    \small
    \setlength{\tabcolsep}{5pt}
    \renewcommand{\arraystretch}{1.06}

    \begin{threeparttable}
    \caption{White-box results under noise-free conditions.}
    \label{tab:app_whitebox_results}

    \begin{tabular*}{\linewidth}{
        @{\extracolsep{\fill}}c l r r r@{}
    }
        \toprule
        \textbf{Type}
        & \textbf{Algorithm}
        & \textbf{Mean $R^2$} $\uparrow$
        & \textbf{Complexity} $\downarrow$
        & \textbf{SN} $\uparrow$ \\
        \midrule

        Regression
        & \textbf{FEAT}
        & 0.9192 & 195.300 & 0.1686 \\
        \midrule

        \multirow{2}{*}{Generative}
        & \textbf{E2ESR}
        & 0.8194 & 89.630 & 0.3675 \\
        & \textbf{NeurSR}
        & 0.4090 & 31.420 & 0.4348 \\
        \midrule

        \multirow{18}{*}{Search}
        & \textbf{AFP}
        & 0.9553 & 37.010 & 0.8185 \\
        & \textbf{AFP-FE}
        & 0.9767 & 40.630 & 0.8197 \\
        & \textbf{AIFeynman2}
        & 0.9000 & 113.200 & 0.7781 \\
        & \textbf{BSR}
        & 0.6807 & 26.950 & 0.3684 \\
        & \textbf{DSR}
        & 0.8351 & 14.940 & 0.8043 \\
        & \textbf{EPLEX}
        & 0.9682 & 52.640 & 0.7882 \\
        & \textbf{GP-GOMEA}
        & 0.9953 & 34.770 & 0.5126 \\
        & \textbf{GPlearn}
        & 0.8688 & 67.710 & 0.7285 \\
        & \textbf{Operon}
        & 0.9888 & 68.730 & 0.3700 \\
        & \textbf{PySR}
        & 0.9587 & 9.219 & 0.3535 \\
        & \textbf{RSRM}
        & 0.7750 & 13.400 & 0.7286 \\
        & \textbf{SBP-GP}
        & 0.9930 & 513.400 & 0.2124 \\

        \cmidrule(lr){2-5}

        & \textbf{GP}
        & 0.9652 & 18.421 & 0.6647 \\
        & \textbf{SN-GP}
        & 0.9960 & 16.485 & 0.7790 \\
        & $\Delta$ (\%)
        & $+3.19\%$ & $-10.51\%$ & $+17.20\%$ \\

        \cmidrule(lr){2-5}

        & \textbf{MCTS}
        & 0.9229 & 18.109 & 0.7790 \\
        & \textbf{SN-MCTS}
        & 0.9356 & 17.549 & 0.7845 \\
        & $\Delta$ (\%)
        & $+1.38\%$ & $-3.09\%$ & $+0.71\%$ \\

        \bottomrule
    \end{tabular*}

    \begin{tablenotes}[flushleft]
        \footnotesize
        \item[] SN denotes Sobolev Novelty.
        $\Delta = 100(x_{\mathrm{SN}}-x_{\mathrm{base}})
        /x_{\mathrm{base}}$
        denotes the relative change computed from the displayed values.
    \end{tablenotes}
    \end{threeparttable}
\end{table}

\begin{table}[t]
    \centering
    \small
    \setlength{\tabcolsep}{5pt}
    \renewcommand{\arraystretch}{1.06}

    \begin{threeparttable}
    \caption{Black-box results.}
    \label{tab:app_blackbox_results}

    \begin{tabular*}{\linewidth}{
        @{\extracolsep{\fill}}c l r r r@{}
    }
        \toprule
        \textbf{Type}
        & \textbf{Algorithm}
        & \textbf{Mean $R^2$} $\uparrow$
        & \textbf{Complexity} $\downarrow$
        & \textbf{SN} $\uparrow$ \\
        \midrule

        Regression
        & \textbf{FEAT}
        & 0.7621 & 82.489 & 0.6555 \\
        \midrule

        \multirow{2}{*}{Generative}
        & \textbf{E2ESR}
        & 0.3612 & 61.094 & 0.6331 \\
        & \textbf{NeurSR}
        & 0.1228 & 13.333 & 0.6590 \\
        \midrule

        \multirow{22}{*}{Search}
        & \textbf{AFP}
        & 0.6333 & 34.894 & 0.7094 \\
        & \textbf{AFP-FE}
        & 0.6400 & 36.039 & 0.7078 \\
        & \textbf{AIFeynman2}
        & 0.2110 & 2239.752 & 0.4286 \\
        & \textbf{BSR}
        & 0.2725 & 22.521 & 0.6740 \\
        & \textbf{DSR}
        & 0.5625 & 9.465 & 0.7467 \\
        & \textbf{EPLEX}
        & 0.7372 & 53.142 & 0.6592 \\
        & \textbf{FFX}
        & 0.5575 & 1561.893 & 0.1549 \\
        & \textbf{GP-GOMEA}
        & 0.7381 & 30.266 & 0.5184 \\
        & \textbf{GPlearn}
        & 0.5390 & 19.058 & 0.7591 \\
        & \textbf{ITEA}
        & 0.6295 & 116.692 & 0.7147 \\
        & \textbf{MRGP}
        & 0.5300 & 10802.169 & N/A \\
        & \textbf{Operon}
        & 0.7945 & 65.689 & 0.3210 \\
        & \textbf{SBP-GP}
        & 0.7869 & 634.019 & 0.2915 \\

        \cmidrule(lr){2-5}

        & \textbf{GP}
        & 0.7194 & 33.733 & 0.5269 \\
        & \textbf{SN-GP}
        & 0.7298 & 35.440 & 0.5907 \\
        & $\Delta$ (\%)
        & $+1.45\%$ & $+5.06\%$ & $+12.11\%$ \\

        \cmidrule(lr){2-5}

        & \textbf{MCTS}
        & 0.5457 & 38.301 & 0.5677 \\
        & \textbf{SN-MCTS}
        & 0.5833 & 37.060 & 0.5796 \\
        & $\Delta$ (\%)
        & $+6.89\%$ & $-3.24\%$ & $+2.10\%$ \\

        \cmidrule(lr){2-5}

        & \textbf{IGSR}
        & 0.5640 & 6.000 & 0.1908 \\
        & \textbf{SN-IGSR}
        & 0.6474 & 5.992 & 0.4218 \\
        & $\Delta$ (\%)
        & $+14.79\%$ & $-0.13\%$ & $+121.07\%$ \\

        \bottomrule
    \end{tabular*}

    \begin{tablenotes}[flushleft]
        \footnotesize
        \item[] SN denotes Sobolev Novelty.
        $\Delta = 100(x_{\mathrm{SN}}-x_{\mathrm{base}})
        /x_{\mathrm{base}}$
        denotes the relative change computed from the displayed values.
        N/A denotes an unavailable value.
    \end{tablenotes}
    \end{threeparttable}
\end{table}

\section{Ablation of First-Order Derivatives}
\label{app:derivative_ablation}

\paragraph{Experimental setup.}
We assess the contribution of first-order derivatives to search accuracy
and evaluation throughput under the same wall-clock search budget.
We compare the complete SN-MCTS with \emph{w/o first derivatives},
which removes the gradient block from the term representation while
retaining the function-value block and term-level projection guidance.
The remaining search procedure follows the SN-MCTS integration in
Appendix~D.2. This comparison evaluates the benefit of derivative
information beyond value-only guidance and its practical computational
impact. We report mean test NMSE and the mean number of SN calls per run,
with the latter measuring how many novelty evaluations are completed
within the fixed time budget.

\paragraph{Results.}
As shown in Figure~\ref{fig:derivative_ablation}, removing first-order
derivatives increases mean NMSE from $0.0644$ to $0.0723$, a relative
increase of $12.3\%$. Under the same time budget, the complete SN-MCTS
performs $5229.4$ SN calls per run on average, compared with $5766.5$
for the ablated variant. The complete method therefore retains $90.7\%$
of the SN call count, corresponding to a $9.3\%$ reduction.
Although derivative evaluation introduces additional computation,
its observed impact on the number of SN evaluations completed within
the fixed budget is modest. The complete method achieves lower
prediction error while preserving most of the evaluation throughput,
supporting the practical value of first-order information for
term-level search guidance.

\begin{figure}[htbp]
    \centering
    \includegraphics[width=0.92\linewidth]{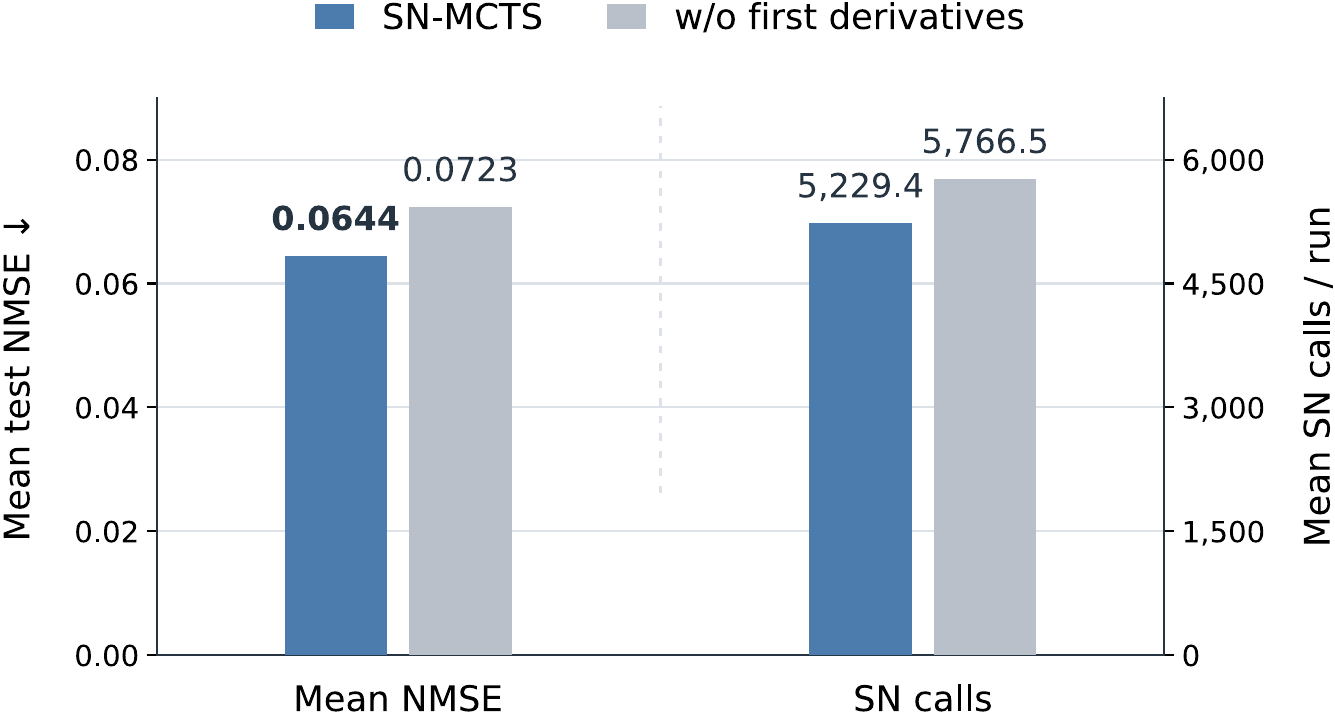}
    \caption{Mean test NMSE (left group, left axis) and mean SN calls
    per run (right group, right axis) under the same time budget.}
    \label{fig:derivative_ablation}
\end{figure}

\end{document}